\documentclass{article}
\usepackage{etoolbox}
\usepackage[bottom]{footmisc}  %

\newcommand{\ourtitle}{%
Feature Responsiveness Scores:\\Model-Agnostic Explanations for Recourse 
}

\newcommand{\repourl}{%
\href{https://github.com/ustunb/reachml}{GitHub}}

\providetoggle{blind}
\providetoggle{splitpages}
\providetoggle{workingversion}
\settoggle{blind}{false} %
\settoggle{workingversion}{false} %
\settoggle{splitpages}{false} %

\iftoggle{splitpages}{%
\let\oldsection\section
\renewcommand\section{\clearpage\oldsection}
}{}

\iftoggle{workingversion}{
\usepackage[textwidth=1in,textsize=scriptsize,color=green!30]{todonotes}
}{
\usepackage[disable]{todonotes}
}

\usepackage{ext/iclr2025_conference,times}
\usepackage{algorithm,algpseudocode}
\iclrfinalcopy
\title{\ourtitle{}
}
\author{%
Harry Cheon\\UC San Diego
\And 
Anneke Wernerfelt\\Haverford College
\And
Sorelle A. Friedler\\Haverford College
\And 
Berk Ustun\\UC San Diego
}
\lhead{Published as a conference paper at ICLR 2025}

\usepackage[toc,page,header]{appendix}
\usepackage{minitoc}

\usepackage[utf8]{inputenc} 
\usepackage[T1]{fontenc}
\usepackage{environ,comment}
\usepackage{amsfonts,bm,bbm,courier,nicefrac,microtype} 
\usepackage{amsmath,amsthm,amssymb}
\usepackage{array,booktabs,tabularx,multirow,colortbl,hhline,tabularray}
\UseTblrLibrary{booktabs}
\usepackage{wrapfig}
\usepackage{soul}
\usepackage{eqnarray}
\usepackage[inline]{enumitem}
\usepackage{xifthen}%
\usepackage{ifthen}
\usepackage{thmtools}
\usepackage{xspace}
\usepackage{svg}

\usepackage{titletoc}

\usepackage{graphicx,xcolor,placeins}
\usepackage[font={footnotesize},labelfont={bf}]{caption}
\usepackage{subcaption}
\usepackage[most]{tcolorbox}

\definecolor{good}{HTML}{BAFFCD}
\definecolor{bad}{HTML}{FFC8BA}
\definecolor{neutral}{HTML}{A2C7DB}

\usepackage{hyperref}
\hypersetup{colorlinks=true, urlcolor=blue, linkcolor=blue, citecolor=blue, linkbordercolor=blue, pdfborderstyle={/S/U/W 1}}

\renewcommand{\arraystretch}{1.2}
\newcolumntype{H}{>{\setbox0=\hbox\bgroup}c<{\egroup}@{}}
\newcolumntype{R}[1]{>{\raggedright\arraybackslash}p{#1}}
\newcommand{\textheader}[1]{{\bfseries{#1}}}
\newcommand{\cell}[2]{\setlength{\tabcolsep}{0pt}\begin{tabular}{#1}#2 \end{tabular}}

\usepackage{pifont}

\usepackage{natbib}
\setcitestyle{numbers,comma,square,sort}

\setitemize{leftmargin=1em,topsep=0.1em,parsep=0.5pt,}
\setlist[enumerate]{leftmargin=*, label= {\arabic*.}, itemsep=0.5em}

\usepackage[capitalise]{cleveref}

\newlist{constraints}{enumerate}{3}
\setlist[constraints]{label={\arabic*.},leftmargin=*}

\newtheorem{theorem}{Theorem}

\newtheorem{remark}[theorem]{Remark}

\newtheorem{fact}[theorem]{Remark}

\theoremstyle{definition}
\newtheorem{definition}{Definition}

\newcommand{\yes}{\ding{51}}%
\newcommand{\no}{\ding{55}}%

\usepackage{algorithm}
\usepackage{algpseudocode}

\algnewcommand{\alginput}[2]{\Statex{Input:~#1}\Comment{#2}}
\algnewcommand\algorithmicinput{\textbf{Input}}

\algnewcommand\algorithmicinitialize{\textbf{Initialize}}
\algnewcommand\algorithmicbigstep{\textbf{Step}}
\algnewcommand\INPUT{\item[\algorithmicinput]}
\algnewcommand\INITIALIZE{\item[\algorithmicinitialize]}
\setlength {\marginparwidth }{2cm}
\algrenewcommand\algorithmiccomment[2][]{#1\hfill{\color{gray}\textit{\scriptsize{#2}}}}

\usepackage{todonotes}

\DeclareMathOperator*{\argmin}{argmin}

\newcommand{\indic}[1]{\mathbbm{1}[#1]}

\renewcommand{\Pr}[1]{\textnormal{Pr}(#1)}

\newcommand{\B}{\{0,1\}}
\newcommand{\R}{\mathbb{R}}
\newcommand{\Z}{\mathbb{Z}}
\newcommand{\intrange}[1]{[#1]}
\newcommand{\defeq}[0]{:=}

\newcommand{\cp}[1]{{\scriptsize{#1}}}
\newcommand{\textds}[1]{{\footnotesize\texttt{{#1}}}} %
\newcommand{\textfn}[1]{{\footnotesize\texttt{{#1}}}} %
\newcommand{\textmn}[1]{{\footnotesize\textsf{{#1}}}} %

\newcommand{\optpar}[2]{\ifthenelse{\isempty{#2}}{#1}{#1({#2})}}

\newcommand{\vecvar}[1]{\bm{#1}}

\renewcommand{\st}{\textnormal{s.t.}}

\newcommand{\xb}{\bm{x}}
\newcommand{\xp}[0]{\xb}
\newcommand{\xq}[0]{\xb'}

\newcommand{\X}{\mathcal{X}}
\newcommand{\Y}{\mathcal{Y}}

\newcommand{\ytarget}[0]{\smash{\hat{y}^\textrm{target}}}
\newcommand{\clf}[1]{\optpar{f}{#1}}

\newcommand{\ab}[0]{\vecvar{a}}

\newcommand{\Ajset}[2]{\smash{A_{#2}({#1})}}

\newcommand{\Rset}[1]{\optpar{R}{#1}}

\newcommand{\Rjset}[2]{\smash{\optpar{R_{#2}}{#1}}}
\newcommand{\Rjhat}[2]{\optpar{\hat{R}_{#2}}{#1}}

\newcommand{\rscore}[2]{\smash{\mu_{#2}({#1})}}
\newcommand{\impscore}[3]{%
\ifthenelse{\isempty{#3}}{%
\phi_{{#2}}({#1})%
}{%
\phi_{{#2}}({#1};{#3})%
}%
}

\newcommand{\impscorevec}[2]{%
\ifthenelse{\isempty{#2}}{%
\bm{\phi}({#1})%
}{%
\bm{\phi}({#1};{#2})%
}%
}

\newcommand{\impscorevecfun}[1]{\bm{\phi}_{{#1}}}

\newcommand{\runningaset}[0]{A_j}

\newcommand{\find}[1]{\mathsf{Find1DAction}{(#1)}}
\newcommand{\runningrset}[0]{\Rjhat{}{j}}
\newcommand{\feasible}[1]{\mathsf{CheckFeasibility}{(#1)}}
\newcommand{\sampleaction}[1]{\mathsf{Sample1DAction}{(#1)}}
\newcommand{\sampledownstream}[1]{\mathsf{SampleDownstream}{(#1)}}
\newcommand{\rscorehat}[2]{\hat{\mu}_{#2}({#1})}
\newcommand{\rscoretilde}[2]{\tilde{\mu}_{#2}({#1})}
\newcommand{\success}[0]{S}
\newcommand{\sampsize}[0]{N}

\newcommand{\textroutine}[1]{{\small\textsf{#1}}}

\newcommand{\epsmin}{\varepsilon_\textrm{min}}
\newcommand{\Knogood}{\mathcal{A}_{j}^\textrm{opt}}
\newcommand{\dpos}[2]{\delta_{#1,#2}^{+}}
\newcommand{\dneg}[2]{\delta_{#1,#2}^{-}}
\newcommand{\chgind}[1]{u_{#1}}

\newcommand{\aopt}[1]{\ab_{#1}}

\newcommand{\mipwhat}[1]{\emph{\scriptsize #1}}
\newcommand{\akpos}{a_{k}^+}
\newcommand{\akneg}{a_{k}^-}

\newcommand{\sublevel}{\reflectbox{\rotatebox[origin=c]{180}{$\Rsh$}}}

\newcommand{\dmMetrics}[0]{%
\cell{l}{%
\% Denied\\ %
\sublevel~\% No Recourse\\ %
\sublevel~\% 1-D Recourse\\ %
\sublevel~\% $n$-D Recourse%
}%
}%

\newcommand{\dmeMetrics}[0]{
    \cell{l}{
        \% Presented with Explanations\\%
        \sublevel~\% All Features Unresponsive\\%
        \sublevel~\% At Least 1 Feature Responsive\\%
        \sublevel~\% All Features Responsive\\%
        \sublevel~\# Features Highlighted
    }
}

\newcommand{\adddatainfo}[5]{\cell{l}{\textds{#1} \\ $n={#2}$ \\ $d={#3}$ \\ {#5}}}
\newcommand{\ficoinfo}[0]{\adddatainfo{heloc}{5,842}{43}{31}{\citet{fico}}}
\newcommand{\germaninfo}[0]{\adddatainfo{german}{1,000}{36}{9}{\citet{dua2019uci}}}
\newcommand{\givemecreditinfo}[0]{\adddatainfo{givemecredit}{120,268}{23}{13}{\citet{data2018givemecredit}}}

\newcommand{\adddatainfolong}[5]{%
\cell{l}{%
\textds{#1}\\
$n={#2}$\\
$d={#3}$ features\\
$d_A={#4}$ mutable\\
{#5}%
}%
}

\newcommand{\ficoinfolong}[0]{\adddatainfolong{heloc}{5,842}{43}{31}{\citet{fico}}}
\newcommand{\germaninfolong}[0]{\adddatainfolong{german}{1,000}{36}{9}{\citet{dua2019uci}}}
\newcommand{\givemecreditinfolong}[0]{\adddatainfolong{givemecredit}{120,268}{23}{13}{\citet{data2018givemecredit}}}

\newcommand{\lime}{{\textmn{LIME}}}
\newcommand{\shap}{{\textmn{SHAP}}}
\newcommand{\shapaa}[0]{\textmn{SHAP-AW}}
\newcommand{\limeaa}[0]{\textmn{LIME-AW}}
\newcommand{\resp}{{\textmn{RESP}}}

\newcommand{\LR}{{$\mathsf{LR}$}}
\newcommand{\RF}{{$\mathsf{RF}$}}
\newcommand{\XGB}{{$\mathsf{XGB}$}}

\newcommand{\regular}{All Features}
\newcommand{\filtered}{Actionable Features}

\definecolor{bad}{HTML}{ff8080}
\definecolor{good}{HTML}{BAFFCD}
\definecolor{recourse}{HTML}{8FAADC}

\algnewcommand\algorithmicforeach{\textbf{for each}}
\algdef{S}[FOR]{ForEach}[1]{\algorithmicforeach\ #1\ \algorithmicdo}

\newcommand\blfootnote[1]{%
  \begingroup
  \renewcommand\thefootnote{}\footnote{#1}%
  \addtocounter{footnote}{-1}%
  \endgroup
}

\begin{document}
\doparttoc
\faketableofcontents
\maketitle
\blfootnote{\vspace{-2em}This is an extended version of the paper accepted at ICLR 2025.}
\begin{abstract}
Consumer protection rules require companies that deploy models to automate decisions in high-stakes settings to explain predictions to decision subjects.
These rules are motivated, in part, by the belief that explanations can promote \emph{recourse} by revealing information that decision subjects can use to contest or overturn their predictions. 
In practice, companies provide individuals with a list of principal reasons based on feature importance derived from methods like SHAP and LIME.
In this work, we show how common practices can fail to provide recourse and
propose to highlight features based on their \emph{responsiveness}---the probability that a decision subject can attain a target prediction through an arbitrary intervention on the feature.
We develop efficient methods to compute responsiveness scores for any model and actionability constraints.
We show that standard practices in lending can undermine decision subjects by highlighting unresponsive features and explaining predictions that are fixed.
\end{abstract}

\section{Introduction}
\label{Sec::Introduction}

Machine learning models routinely automate and support decisions in consumer finance~\citep[][]{hurley2016credit}, employment~\citep[][]{bogen2018help,raghavan2020mitigating}, and public services~\citep{wykstra2020government,eubanks2018automating,gilman2020poverty}. In these domains, companies are increasingly required to provide explanations to decision subjects who receive adverse outcomes (e.g., denied a loan)~\citep{cfpb2024regB, AIBOR, OnlineCivilRightsAct, EUAIAct}. 
In the European Union, for example, Article 86 of the AI Act~\citep{EUAIAct} grants individuals a \emph{right to explanation} in ``high risk'' domains ~\citep[see Annex III of][]{EUAIAct}.
In the United States, the \emph{adverse action} provision in the Equal Credit Opportunity Act mandates that lenders provide a list of ``principal reasons'' to consumers who are denied credit~\cite{cfpb2024regB}.  

Explanations are a cornerstone of consumer protection in such domains because they may reveal information that consumers could use to exercise their broader rights~\citep[][]{edwards2017slave}. 
In the European Union, for instance, the right to an explanation in the GDPR is meant to reveal information that consumers could use to contest their decisions or request human review~\citep[][]{kaminski2021right}.
Likewise, in the United States, adverse action notices are meant to support: \emph{anti-discrimination}, by revealing that a prediction was based on protected characteristics; \emph{rectification}, by revealing that a prediction was based on erroneous information; and \emph{recourse}, by revealing how to attain a desired prediction in the future~\citep{taylor1980meeting,selbst2018intuitive}.

Explainability mandates provide companies with substantial leeway on how they build explanations. In practice, companies resort to the path of least resistance, using popular feature attribution methods like SHAP and LIME to report features in \emph{feature-highlighting explanations} for decision subjects. However, we do not yet know to what extent standard practices achieve the goals of explainability mandates. 
This information is necessary to guide efforts in enacting and enforcing explainability mandates, especially considering many are in early stages of development.

\begin{figure}[t!]
    \centering
    \includegraphics[page=1,trim=0.0in 4.8in 0.9in 0.0in,%
    clip,width=\textwidth]{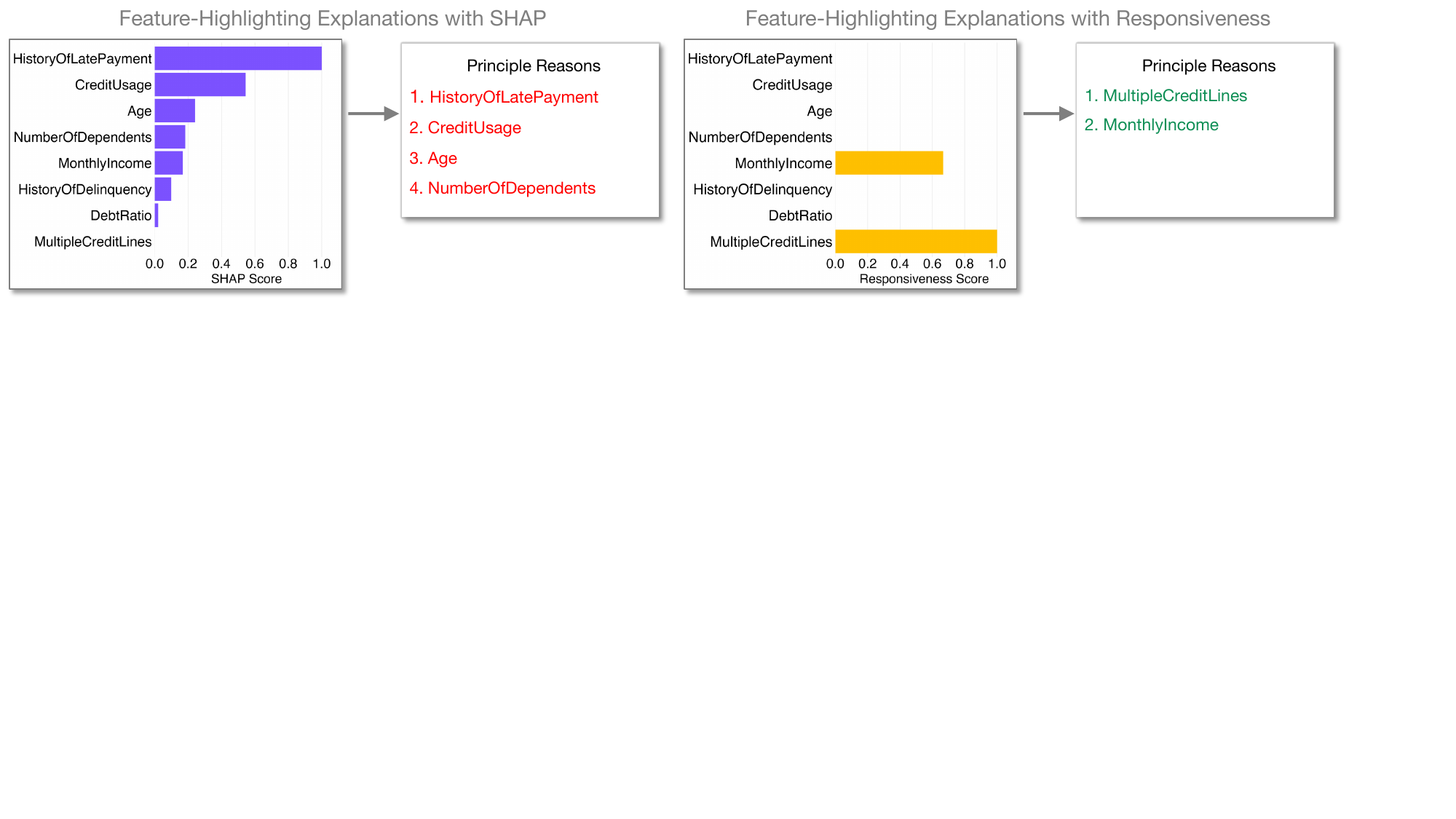}
    \caption{Feature-highlighting explanations for a person denied credit by an XGBoost model on the \textds{\cp{givemecredit}} dataset in~\cref{Sec::Experiments}. We show explanations that highlight up to 4 features with the largest \smash{\shap{}} scores (left) and responsiveness scores (right). As shown, an explanation built with \smash{\shap{}} highlights features that the person cannot change~(e.g., \textfn{\cp{Age}}, \textfn{\cp{HistoryOfLatePayment}}, \textfn{\cp{NumberOfDependents}}) or \emph{unresponsive} (\textfn{\cp{CreditUsage}}, which can be changed but would not lead to a target prediction). In contrast, an explanation built with responsiveness scores highlights \emph{the only} 2 features lead to a desired prediction: \textfn{\cp{MonthlyIncome}} and \textfn{\cp{MultipleCreditLines}}.
    }
    \label{Fig::ResponsivenessOverview}
\end{figure}

We study how explanations can effectively achieve one of their goals---helping consumers attain \emph{recourse}. 
Our main contributions include:
\begin{enumerate}[itemsep=0.1em,topsep=0pt]
    
    \item We identify feature attribution methods can provide \emph{reasons without recourse}---reporting ``important'' features that do not facilitate recourse.

    \item We introduce an approach to highlight features that lead to recourse by measuring \emph{responsiveness}---the probability that an individual can attain a target outcome by intervening on a specific feature.
    
    \item We develop methods to compute feature responsiveness scores for any classification model. Our methods can enforce complex actionability constraints that allow practitioners to control the set of interventions and their downstream effects.

    \item We conduct an extensive empirical study on feature-highlighting explanations in lending. Our results show that standard methods can harm consumers by highlighting immutable and unresponsive features, and that our approach promotes recourse and transparency by highlighting responsive features and flagging predictions that are difficult or impossible to change. %
    
    \item We include a Python library to compute feature responsiveness scores, available on \repourl{} and installable by {\small\texttt{pip install reachml}}.
\end{enumerate}

\paragraph{Related Work}
\begin{wrapfigure}[19]{R}{0.35\linewidth}
\centering
\vspace{-1em}
\resizebox{\linewidth}{!}{
\includegraphics[%
page=2,%
trim=0.0in 5.15in 10.9in 0.0in,%
clip,%
width=\linewidth%
]{figures/figures.pdf}
}
\caption{%
Standard methods for recourse provision return the closest action that leads to a target prediction $\smash{\bm{a}^\textrm{opt}}$.
Our method estimates the proportion of actions on each feature that lead to a target prediction.
Here, $\mu_1 = \tfrac{3}{4}$ and $\mu_2=\tfrac{1}{4}$ because $\xp$ can attain a target prediction through 3/4 actions on $x_1$, or 1/4 actions on $x_2$.}
\label{Fig::ResponsivenessVsRecourse}
\end{wrapfigure}

Our work is related to a stream of methods to explain individual predictions~\citep[][]{ribeiro2016should,lundberg2017unified,marx2019disentangling,kaur2020interpreting}. We identify these methods can inflict harm by providing individuals with \emph{reasons without recourse}. 
We view reasons without recourse as a structural limitation that affects how we operationalize explainability mandates, akin to limitations of explainability that arise due to the multiplicity of predictions~\citep{marx2020multiplicity,watsondaniels2022predictive,black2022model}, the indeterminacy of explanations~\citep[]{brunet2022implications,krishna2022disagreement}, and the potential for fairwashing~\citep{aivodji2019fairwashing,slack2020fooling,goethals2023manipulation}. 

Our goals are aligned with works in algorithmic recourse, in that we seek to provide individuals with information to overturn adverse outcomes~\citep[][]{ustun2019actionable, karimi2020algorithmic,venkatasubramanian2020philosophical}. Many recourse methods are designed to return an \emph{action} that an individual could perform to attain a target prediction. In contrast, our method is designed to estimate the prevalence of actions that lead to a target prediction (see \cref{Fig::ResponsivenessVsRecourse}). 
We construct these estimates through algorithms that sample or enumerate a set of reachable points~\citep{kothari2024prediction}. The resulting approach is model agnostic and can be adapted them to address practical challenges related to causality~\citep[][]{karimi2021interventions,dominguez2022adversarial,galhotra2021explaining} and distributional robustness~\citep{nguyen2023distributionally,pawelczyk2023probabilistically,upadhyay2021towards}.

\section{Problem Statement}
\label{Sec::ProblemStatement}

We consider a classification task where a company uses a model $\clf{}: \X\to \Y$ to predict a label $y \in \Y$ from a set of \emph{features} $\xb = [x_1, x_2, \ldots, x_d] \in \X \subseteq \R^d$.  We focus on tasks where each instance represents a person, and their features encode semantically meaningful characteristics. In such settings, we can assume that features are bounded. In practice, many features are bounded by definition---e.g., a binary feature such as $\textfn{recent\_payment} \in \{0,1\}$. In other cases, we can set loose bounds that apply to all decision subjects---e.g., $\textfn{age} \in [0, 120]$.

We specify the subset of individuals who are entitled to explanations in terms of a \emph{target prediction} $\ytarget \in \Y$. We assume that $\ytarget$ represents a desirable outcome, e.g., $\clf{\xp} = \ytarget{} = 1$ if a person with features $\xp$ will repay their loan. Under these conventions, companies provide an explanation to any person with features $\xp$ such that $\clf{\xp} \neq \ytarget$. Informally, such an explanation would lead to \emph{recourse}~\citep{ustun2019actionable} when it contains specific information for this person to overturn an adverse outcome---e.g., by describing how to change their features to attain a point $\xq$ such that $\clf{\xq}=\ytarget$.

\paragraph{Feature-Highlighting Explanations}

Companies comply with explainability mandates by building a \emph{feature-highlighting explanation}---i.e., a list that contains the most important features for a specific prediction~\citep[][]{barocas2020hidden}.
In practice, they derive the importance of each feature from post-hoc explainability methods such as LIME or SHAP. In what follows, we refer to this procedure as a \emph{feature attribution method} and define it below.
\begin{definition}
\label{Def::FeatureExplainer}
Given a model $\clf{}: \X \to \Y$ and a point $\xp \in \X$, a \emph{feature attribution method} is a function $\impscorevecfun{}: \X \to \R^d$ that returns a vector of \emph{feature importance scores} $\impscorevec{\xp}{\clf{}} := [\impscore{\xp}{1}{\clf{}},\ldots, \impscore{\xp}{d}{\clf{}}]$. The score for each feature $\impscore{\xp}{j}{\clf{}}$ reflects its relative contribution towards the prediction. In what follows, we write $\impscorevec{\xp}{}$ instead of $\impscorevec{\xp}{\clf{}}$ when $\clf{}$ is clear from context.
\end{definition}
We use the function $\impscorevecfun{}: \X \to \R^d$ to represent common approaches to extract feature importance scores from local explanations:
\begin{itemize}[itemsep=0.1em,topsep=0.0em]

    \item \emph{Local Surrogates}~\citep[see e.g.,][]{ribeiro2016should,zhou2021s,zafar2019dlime}, which explain the prediction of a model $\clf{}$ at a point $\xp$ by fitting a surrogate model to approximate the decision boundary of $f$ near $\xp$. Given the surrogate model, we use its parameters to determine the importance scores for each feature: $\impscore{\xp}{j}{}$. %
    
    \item \emph{Shapley Values}~\citep[see e.g.,][]{lundberg2017unified,jethani2021fastshap,fumagalli2024shap}, which cast the features of a model $\clf{}$ as ``players'' in a cooperative game. Each score $\impscore{\xp}{j}{}$ reflects the marginal contribution of feature $j$ towards the prediction $\clf{\xp}$. %
\end{itemize}
Scores from these methods indicate relative importance due to the following properties:
\begin{itemize}[leftmargin=*,itemsep=0.25em,topsep=0.0pt]

\item \emph{Relevance}: Changing a feature $j$ with $\impscore{\xp}{j}{} = 0$ does not affect the model (i.e., the feature can be dropped)~\cite[see e.g., the missingness axiom in][]{lundberg2017unified}.

\item  \emph{Strength}: Given two features $j, k \in \intrange{d}$ such that $|\impscore{\xp}{j}{}| > |\impscore{\xp}{k}{}|$, feature $j$ has a stronger contribution to the prediction than feature $k$~\cite[see e.g.,][]{neuhof2024ranking}.
\end{itemize}
Given a list of top-scoring features, companies convert the list into a natural language explanation~\citep[e.g., a reason code][]{experianReasonCodes,fico2022reasoncode}. In doing so, they can claim that they have met regulatory requirements by providing tailored and accessible explanations to decision subjects.

\paragraph{Reasons without Recourse}
\begin{wraptable}[18]{R}{0.35\textwidth}
\resizebox{\linewidth}{!}{
\begin{tabular}{ccrlc}
\multicolumn{2}{c}{\cell{>{\bfseries}c}{Feature\\Values}} &
\multicolumn{2}{c}{\cell{>{\bfseries}c}{Label \\Counts}} &
\multicolumn{1}{c}{\cell{>{\bfseries}c}{Best\\Predictions}} \\
\cmidrule(lr){1-2}\cmidrule(lr){3-4}\cmidrule(lr){5-5}
\textfn{age\,$\geq$\,60} & \textfn{savings\,$\geq$\,50K} &
 $n_0$ & 
 $n_1$ & 
 $\clf{x_1,x_2}$ \\
\cmidrule(lr){1-2}\cmidrule(lr){3-4}\cmidrule(lr){5-5}
 0 & 0 & 
 40 & 10  &
 0 \\
\cmidrule(lr){1-2}\cmidrule(lr){3-4}\cmidrule(lr){5-5}
 0 & 1 & 
 10 & 30 &
 1 \\
\cmidrule(lr){1-2}\cmidrule(lr){3-4}\cmidrule(lr){5-5}
 \rowcolor{bad} 1 & 0 & 
 20 & 10 &
 $0$ \\
\cmidrule(lr){1-2}\cmidrule(lr){3-4}\cmidrule(lr){5-5}
 \rowcolor{bad} 1 & 1 & 
 30 & 10 &
 $0$ %
\end{tabular}
}
\caption{Stylized classification task where the most accurate model assigns \emph{fixed predictions} due to an immutable feature \textfn{\cp{age\,$\geq$\,60}}. We train a model to predict $y = \textfn{\cp{repayment}}$ $(x_1, x_2)$ = (\textfn{\cp{age\,$\geq$\,60}}, \textfn{\cp{savings\,$\geq$\,50K}}) on a dataset with $n_0$ negative labels and $n_1$ positive labels. Here, individuals with \textfn{\cp{age}}\,$\geq$\,\textfn{\cp{60}} = 1 are assigned a fixed prediction, as $f(x_1, x_2) = 0$ for all reachable points $\{(1,0), (1,1)\}$.}
\label{Tab::NoRecourse}
\end{wraptable}

One of the limitations of feature-highlighting explanations based on importance scores is that they can highlight features that do not provide recourse. We refer to this phenomenon as \emph{reasons without recourse}. Feature attribution methods can provide reasons without recourse due to two key blind spots:
\begin{itemize}
\item \emph{Ignorance of Counterfactual Behavior}: They can assign high scores to features that are not responsive---i.e., changing the feature does not change the prediction (see e.g., \citet{bilodeau2024impossibility}).

\item \emph{Ignorance of Actionability}: They do not account for how individuals can change their features. This can lead them to assign high scores immutable features.
\end{itemize}
In practice, these failure modes can render explainability mandates counterproductive.
Explanations can highlight the ``wrong'' features if there are other features that the decision subject can change to attain recourse.
In the worst case, the explanation can be providing reasons for a fixed prediction---i.e., $\clf{\xp}$ remains the same under all feasible actions (see \cref{Tab::NoRecourse}).

\section{Measuring Feature Responsiveness}
\label{Sec::ResponsivenessScores}

\newcommand{\vb}[0]{\bm{v}}
\newcommand{\dnstream}[2]{\bm{\delta}({#1,#2})}
\newcommand{\dnstreamDet}[2]{\bm{\delta}^\mathrm{det}({#1,#2})}
\newcommand{\dnstreamSto}[2]{\bm{\varepsilon}({#1,#2})}
\newcommand{\Vjset}[2]{V_{#2}({#1})}
\newcommand{\Vset}[1]{V({#1})}
\newcommand{\dnstreamRV}[0]{E}
\newcommand{\dnstreamProb}[0]{P_{\xb,\ab}}
\newcommand{\newXa}[2]{X^\textrm{reach}_{j}({#1,#2})}
\newcommand{\newX}[1]{X^\textrm{reach}_{j}({#1})}

Our goal is to measure the \emph{responsiveness} of features for decision subjects---i.e., how often the prediction of a model changes after they intervene on a given feature.
However, individuals can only intervene on features in certain ways, and these changes may also have downstream effects on other features.
For instance, changing \textfn{married} from $0 \to 1$ will change \textfn{single} from $1 \to 0$.
Or increasing \textfn{years\_account\_history} will also cause a commensurate increase in \textfn{age}.
In light of these challenges, we describe how decision subjects can intervene on individual features through an \emph{intervention model}:
\begin{definition}
\label{Def::ResponseModel}
Given a point $\xp$, we assume that individual who intervene feature $j$ will move to a new point $\xq$ where:
\[
     \xq = \xp + \ab + \dnstreamSto{\xb}{\ab}
\]
Here:
\begin{itemize}
    \item $\ab = [a_1,\ldots, a_d] \in \Ajset{\xp}{j}$ is an \emph{action} that represents the deterministic components of the intervention; this includes the change in feature $j$ and its deterministic downstream effects. We assume that $a_j \neq 0$ and refer to the set of all possible actions as the \emph{action set} $\Ajset{\xp}{j}$.
    
    \item $\dnstreamSto{\xb}{\ab}$ is a sample of the random variable that represents the stochastic component of the intervention. The sample $\dnstreamSto{\xp}{\ab}$ is drawn from the probability distribution $\dnstreamProb$.
\end{itemize}

For a fixed action $\ab$, $\xq$ is a realization of a random variable. In what follows, we denote the modified features associated with a specific action $\ab$ as the random variable $\newXa{\xp}{\ab}$. When the action itself is also random, we denote the resulting random variable as $\newX{\xp}$.
\end{definition}

\paragraph{On Specification}

This model allows practitioners to specify feasible interventions and their deterministic and stochastic effects.
They can define feasible interventions and deterministic downstream effects in the action set $\Ajset{\xp}{j}$ through \emph{actionability constraints}. These include \emph{separable constraints} that only pertain to one feature (e.g., bounds and monotonicity) and \emph{joint constraints} across multiple features (e.g., deterministic downstream effects). As shown in \cref{Table::ActionabilityConstraintsCatalog}, we can elicit these constraints from human experts in natural language and convert them into equations that we can embed into optimization problems to enforce actionability~\citep[e.g., to search for recourse actions][]{ustun2019actionable,kothari2024prediction}.

Practitioners can define stochastic effects through the conditional probability distribution $\dnstreamProb$. 
This distribution can represent probabilistic causal effects of interventions. For example, we can model the impact of employment on health insurance with $\varepsilon_{\textfn{has\_insurance}} \sim a_{\textfn{employed}} \cdot \mathrm{Bernoulli}(\lambda)$, where whether one has health insurance largely follows their employment status.
Similarly, we can model random fluctuations in features that occur between successive predictions. For instance, we can model the fluctuation in the number of bank transactions per month with $\varepsilon_{\textfn{n\_transactions}} \sim \mathrm{Pois}(\lambda)$. In both cases, $\dnstreamProb$ denotes the corresponding probability mass functions.

\begin{table}[b]
   \centering
   \resizebox{0.95\linewidth}{!}{
   \begin{tabular}{lHHHR{0.49\linewidth}lR{0.37\linewidth}}
        \textbf{Requirement} & 
         & 
         & 
        &
        \textbf{Example} &
        \textbf{Features} & 
        \textbf{Actionability Constraint}
        \\
   \midrule

   Immutability & 
   \yes & \yes & \no &
   \cell{l}{$\textfn{age}$ cannot change} &
   $x_j = \textfn{age}$ &
   $v_j = 0$ \\
   \midrule

    Monotonicity & 
   \yes & \yes & \no &
   $\textfn{recent\_payment}$ can only increase & 
   $x_j = \textfn{recent\_payment}$ &
   $v_j \geq 0$ \\
   \midrule
   
    Integrality & 
    \yes & \yes & \yes &
    $\textfn{late\_payments}$ must be positive integer $\leq 12$ &
    $x_j = \textfn{late\_payments}$ &
    $v_j \in \Z^+ \cap [0 - x_j, 12 - x_j]$  \\
    \midrule
    
    \cell{l}{Encoding\\Validity} & 
    \no & \yes & \yes &
    \cell{l}{%
    preserve one-hot encoding of categorical\\ feature 
    \textfn{housing} $\in \{\textfn{own}, \textfn{rent}, \textfn{other}\}$%
    } & 
    \cell{l}{%
    $x_k = \indic{\textfn{housing=own}}$\\ 
    $x_l = \indic{\textfn{housing=rent}}$\\
    $x_m = \indic{\textfn{housing=other}}$
    } &
    \cell{l}{
    $v_j + x_j \in \{0,1\} ~\textrm{for}~ j \in \{k,l,m\}$ \\
    $\sum_{j \in \{k,l,m\}} v_j + x_j = 1$} \\ 
    \midrule

    \cell{l}{Logical\\Implication} & 
    \no & \yes & \yes &
    \cell{l}{%
    if $\textfn{has\_savings\_account}=\textfn{TRUE}$\\
    then $\textfn{savings\_balance} \geq 0 $\\ 
    else $\textfn{savings\_balance} = 0$}  &
    \cell{l}{%
    $x_j = \textfn{has\_savings\_account}$\\ $x_k = \textfn{savings\_balance}$} &
    \cell{l}{%
    $v_j + x_j \in \{0,1\}$\\
    $v_k + x_k \in [0, 10^{12}]$\\
    $v_k + x_k\leq 10^{12}(x_j + v_j)$%
    }  \\ 
    \midrule
    
    \cell{l}{Causal\\Implication} & 
    \no & \no & \yes &
    \cell{l}{if $\textfn{years\_of\_account\_history}$ increases\\then $\textfn{age}$ will increase commensurately} &  
    \cell{l}{$x_j = \textfn{years\_of\_account\_history}$\\ $x_k = \textfn{age}$} &
    \cell{l}{
    $x_j + v_j \leq x_k + \delta_k$ \\
    $\delta_k \in [0, 100]$
    }\\
    \bottomrule
    \end{tabular}
    }
    \caption{Examples of actionability constraints on semantically meaningful features for a lending task. Each constraint can be expressed in natural language and embedded into an optimization problem using standard techniques in mathematical programming~\cite [see, e.g.,][]{wolsey2020integer}. See \cref{Appendix::ExperimentDetails} for more examples.
    } 
    \label{Table::ActionabilityConstraintsCatalog}
\end{table}

\paragraph{Measuring Responsiveness}
Given our intervention model, we wish to score each feature by the probability that an individual attains a target prediction after performing an arbitrary intervention.
\begin{definition}
\label{Def::ResponsiveScore}
Given a model $\clf{}:\X \to \Y$, a point $\xp \in \X$, a feature $j \in \intrange{d}$, its action set $\Ajset{\xp}{j}$ and the downstream distribution $\dnstreamProb$, the \emph{responsiveness score} of feature $j$ measures the probability that an intervention on feature $j$ attains the target prediction:
\begin{align*}
    \rscore{\xp}{j} :=
    \mathrm{Pr}\Bigl(f\bigl(\newXa{\xp}{\ab}\bigr) = \ytarget{} \mid \ab \in \Ajset{\xp}{j}\Bigr)
\end{align*}
\end{definition}
Here, a score of $\rscore{\xp}{j} = 0$ means that changing feature $j$ cannot achieve $\ytarget$, while $\rscore{\xp}{j} = 1$ means any intervention on $j$ will achieve $\ytarget$.
\paragraph{Benefit for Consumer Protection}
When we construct feature-highlighting explanations using the top-$k$ most responsive features, we reveal the $k$ most promising paths to recourse.
By construction, they are features where arbitrary interventions are most likely to lead to a target prediction.
We make no assumptions on how individuals will change their features beyond actionability constraints specified in $\Ajset{\xp}{j}$. This is because modeling how individuals will intervene on features is not feasible; it cannot be verified a priori.

Contrary to existing methods, our approach provides explanations \emph{only} to individuals with recourse---i.e., we would never provide reasons without recourse. In effect, we can detect instances where providing feature-highlighting explanations can be misleading or harmful by checking if the responsiveness score $\rscore{\xp}{j} = 0$ for all features.
\begin{remark}
    \label{Rem::Recourse}
    Given a model $\clf{}: \X \to \Y$, denote its feature responsiveness scores at the point $\xp \in \X$ as $\rscore{\xp}{1} \ldots \rscore{\xp}{d}$.
    If $\rscore{\xp}{j} = 0$ for all $j\in \intrange{d}$, then either:
        \begin{enumerate}[label={(\alph*)},itemsep=0.1em,topsep=0.1em]
        \item $\clf{}$ assigns a fixed prediction to $\xp$, or \label{SG::RedLightFixed}
        \item $\clf{}$ can only provide recourse to $\xp$ through an intervention on two or more features. \label{SG::RedLightJoint}
        \end{enumerate}
        \label{SG::RedLight}
\end{remark}

According to \cref{Rem::Recourse}, there are two scenarios where $\rscore{\xp}{j} = 0$ for all $j \in \intrange{d}$. We can employ different strategies to mitigate harm in each case. In Case \ref{SG::RedLightFixed}, where individuals receive fixed predictions, we can withhold explanations and notify developers or regulators. In Case \ref{SG::RedLightJoint}, where individuals can only overturn their prediction by intervening on multiple features at the same time, we can include a warning against assuming feature independence.

The responsiveness score $\rscore{\xp}{j}$ depends on the actionability constraints that characterize $\Ajset{\xp}{j}$---i.e., responsiveness scores can change under a different set of constraints. Hence, if constraints are misspecified (e.g., ignoring monotonicity constraints), the scores can lead to misleading conclusions. In tasks where downstream effects are deterministic, we can mitigate this effect by encoding \emph{indisputable constraints} based on feature encodings or physical limits. Then, the corresponding $\rscore{\xp}{j}$ represents an upper bound on the true responsiveness of feature $j$---i.e., decision subjects flagged with a fixed prediction will also have a fixed prediction under more stringent constraints.

\section{Computing Scores with Reachable Sets}
\label{Sec::Computation}

We now introduce an approach to compute responsiveness scores for any model. We compute the responsiveness score of feature $j$ using its reachable set $\Rjset{\xp}{j}$---the set of reachable points through interventions on $j$ (see \cref{Fig::ResponsivenessComputation}). We can generate the reachable set $\Rjset{\xp}{j}$ either by enumerating all possible points---when features are discrete and interventions do not have stochastic effects---or sampling. Given $\Rjset{\xp}{j}$, we can compute the responsiveness score as:
\begin{align}
\rscore{\xp}{j} &:= 
\mathbb{E}_{\xq \sim \newX{\xp}} \bigl[ \indic{\clf{\xq} = \ytarget} \bigr]
\label{Eq::ExpectedResponsivenessReachable}
\end{align}

This approach has several benefits. It is model agnostic; given the reachable sets for each feature, computing the reachable set only requires query access to the model. We can also amortize the cost of generating reachable points by generating the reachable sets \emph{once} and (re)using it to compute responsiveness scores for any model (e.g., during model selection).

In practice, enumerating or sampling reachable points can be challenging. We often need to consider points in regions with little structure---points may have both discrete and continuous dimensions and obey non-convex constraints. Furthermore, when we have stochastic downstream effects, we must check that both interventions and their downstream effects are feasible under actionability constraints. 
We overcome these challenges by casting the generation of reachable points as repeated optimization problems. 

\begin{figure}[!t]
    \centering
    \includegraphics[trim=0.0in 4.8in 4.8in 0.0in,clip,page=4,width=0.8\textwidth]{figures/figures.pdf}
    \caption{Stylized example showing how to compute responsiveness scores for a classification model with three features \textfn{\cp{n\_loans}}, \textfn{\cp{guarantor}} and \textfn{\cp{age}}. The reachable set $R_j(\xp)$ all points that can be attained from $\xp = (3,0,24)$ by intervening on feature $j$, and $R_3(\xp) = \varnothing$ because \textfn{\cp{age}} is immutable. Given a model $f$, we compute the responsiveness score of each feature by querying its predictions over points in their reachable set $R_j(\xp)$.}
\label{Fig::ResponsivenessComputation}
\end{figure}

\paragraph{Sampling}
\begin{wrapfigure}[15]{R}{0.45\linewidth}
\centering
\vspace{-21pt}
\begin{minipage}{0.45\textwidth}
\begin{algorithm}[H]
\begin{algorithmic}[1]\footnotesize
\Require{$\xp \in \X$}\Comment{point}
\Require{$\Ajset{\xp}{j}$}\Comment{action set for feature $j$}
\Require{$\dnstreamProb$}\Comment{stochastic downstream effect dist. for $j$}
\Require{$\sampsize \in \mathbb{N}$}\Comment{sample size (see \cref{Appendix::SamplingDetails})}
     \Statex $\runningrset \gets \varnothing$
\Repeat
     \State $\ab \gets \sampleaction{\xp, \runningaset}$\label{AlgStep::SampleAction}
     \State $\bm{\varepsilon} \gets \sampledownstream{\xp, \ab, \dnstreamProb}$\label{AlgStep::SampleDownstream}
     \If{$\feasible{\xb, \ab + \bm{\varepsilon}, \runningaset}$}\label{AlgStep::CheckFeasibility}
         \State $\runningrset \gets \runningrset \cup \{ \xp + \ab + \bm{\varepsilon}\}$
     \EndIf
     \Until{$|\runningrset| = \sampsize$}
     \Ensure{$\runningrset$} \Comment{$\sampsize$ reachable points via actions on $j$}
\end{algorithmic}
\caption{{\small Sample Reachable Points}}
\label{Alg::ReachableSetSampling}
\end{algorithm}
\end{minipage}
\end{wrapfigure}
We present a procedure to sample reachable points in \cref{Alg::ReachableSetSampling}. Given a point $\xp$ and an action set $\Ajset{\xp}{j}$, this procedure returns a uniform sample of $\sampsize$ reachable points via rejection sampling.
In \cref{AlgStep::SampleAction}, it calls the $\sampleaction{\xp, \runningaset}$ routine to propose a candidate deterministic change $\ab$ that obeys separable constraints such as bounds and integrality. We then sample its stochastic downstream effect $\bm{\varepsilon}$ in \cref{AlgStep::SampleDownstream}. In \cref{AlgStep::CheckFeasibility}, it then calls the \textsf{CheckFeasibility} routine to check if both the intervention and the downstream effect obey actionability constraints by solving a mixed-integer program. The procedure terminates once it has sampled $\sampsize$ reachable points through interventions on $j$. Given $\Rjset{\xp}{j}$, we can recover an unbiased estimate of the responsiveness score for feature $j$ and a model $\clf{}$ as $\rscorehat{\xb}{j} := \frac{1}{\sampsize{}}\sum_{\xq\in\Rjhat{j}{\xp}} \indic{\clf{\xq} = \ytarget{}}$. We can set the sample size $\sampsize$ to ensure practical guarantees on how reliably we flag fixed predictions (\cref{Rem::Recourse}) as described in \cref{Appendix::SamplingDetails}.
 
\paragraph{Enumeration}
\renewcommand{\runningrset}{R_j}
\begin{wrapfigure}[11]{R}{0.45\linewidth}
\centering
\vspace{-23pt}
\begin{minipage}{0.45\textwidth}
\begin{algorithm}[H]
\begin{algorithmic}[1]\footnotesize
\Require{$\xp \in \X$}\Comment{point}
\Require{$\Ajset{\xp}{j}$}\Comment{action set for discrete feature $j$}
\Statex $\runningrset \gets \varnothing$, $\runningaset \gets \Ajset{\xp}{j}$
\Repeat 
     \State $\ab^{*} \gets \find{\xp, \runningaset}$ %
     \State $\runningrset \gets \runningrset \cup \{ \xp + \ab^{*} \}$ %
     \State $\runningaset \gets \runningaset \setminus \{ \ab^* \}$ \label{AlgStep::RemoveAction}%
\Until {$\find{\xp, \runningaset}$ is infeasible}
\Ensure{$\runningrset$} \Comment{all reachable points via actions in $j$}
\end{algorithmic}
\caption{\small{Enumerate Reachable Points}}
\label{Alg::ReachableSetEnumeration}
\end{algorithm}
\end{minipage}
\end{wrapfigure}
We present a procedure to enumerate a reachable set $\Rjset{\xp}{j}$ in \cref{Alg::ReachableSetEnumeration} for discrete features with deterministic downstream effects. Given the action set, which encodes all actionability constraints (including deterministic downstream effects), the procedure enumerates reachable points for feature $j$ by repeatedly solving the following discrete optimization problem:
\begin{align*}
\label{Eq::Find1DAction}
    \find{\xp, \runningaset} :=  \argmin_{\ab \in \Ajset{\xp}{j}}  \|\ab\|_1
\end{align*}
We formulate $\find{\xp, \runningaset}$ as a mixed-integer program, and update it at each iteration with a ``no good'' constraint to remove previous optima in \cref{AlgStep::RemoveAction} (see \cref{Appendix::FindMIP} for exact formulation). We use each action to add a reachable point to $\Rjset{\xp}{j}$ and use the final set to calculate \emph{exact} responsiveness scores. We adapt a method to enumerate the reachable set for all features from \citet{kothari2024prediction}, but is more tractable as we only enumerate points that can we can attain through interventions on each feature.

\paragraph{Extensions}
\label{Par::Variants}
\newcommand{\rbscore}[2]{\mu^\textrm{robust}_{#2}({#1})}
\newcommand{\wtscore}[2]{\mu^\textrm{cost}_{#2}({#1})}

One of the benefits of reachable sets is that we easily customize scores to meet additional requirements.
One such requirement is \emph{monotonicity}, i.e., if a person is guaranteed a target prediction by increasing (or decreasing) a feature beyond a threshold value. 
In the simplest case, we can account for properties through operations like filtering or weighing (see e.g., \cref{Sec::Demos}). In general, we can construct responsiveness scores that address practical challenges given additional inputs:
\begin{itemize}[itemsep=0.2em]

    \item \emph{Individual Preferences}: Given a cost function that captures the difficulty of actions in each direction, we can highlight features that are easier to change (i.e., least costly $k$ features) using a cost-weighed score: $\wtscore{\xp; \; \mathrm{cost}}{j} = \sum_{\xq \in \Rjset{\xp}{j}} \mathrm{cost}(\xq;\xp) \cdot \indic{\clf{\xq} = \ytarget}$. %
    
    \item \emph{Distributional Robustness}: Given a general reachable set $\Rset{\xp}$ that contains all points that we could reach through interventions on any feature, we highlight features that attain a target prediction regardless of how other features change through the \emph{robust score}:
    $\rbscore{\xp}{j} = \min_{\bm{\delta} \in \Delta_{-j}} \mathbb{E}_{X' \sim \Rjset{\xp}{j}}[\indic{\clf{X'+ \bm{\delta}} = \ytarget{}}],$
    where $\Delta_{-j} := \{\bm{\delta} \in \R^d \mid \delta_j = 0, \|\bm{\delta}\| < \varepsilon\}.$ %

\end{itemize}

\section{Experiments}
\label{Sec::Experiments}

We present an empirical study on the responsiveness of explanations. Our results reveal the limitations of existing feature attribution methods and show how our approach can support recourse and flag fixed predictions. We include details in \cref{Appendix::ExperimentDetails}, and code to reproduce our results on \repourl{}.

\vspace{-0.5em}
\paragraph{Setup}
We work with three publicly available classification datasets from consumer finance. Here, each instance represents a consumer and the label indicates if they will repay a loan. We consider discrete version of each dataset in which we can compute exact responsiveness scores and certify if each person has recourse. Given these datasets, we define \emph{inherent actionability constraints} which reflect indisputable requirements that apply to all individuals (e.g., no changes to immutable attributes, preserve feature encoding, and adhere to deterministic causal effects).

We split each dataset into a training sample (80\%; to train models) and a test sample (20\%; to evaluate out-of-sample performance). We fit models using (1) \emph{logistic regression} (\LR{}), (2) \emph{XGBoost} (\XGB{}), and (3) \emph{random forests} (\RF{}). For each model, we construct a feature-highlighting explanation for each person who is denied credit in the dataset that includes up to \emph{four features}; if all features have a score of 0, we do not present an explanation for that individual. The choice of \emph{up to four} features reflects the recommended number of reasons to show in an adverse action notice by the U.S. Consumer Finance Protection Bureau~\citep[see][]{cfpb2024regBcomment}. We include the top-4 scoring features from the following methods:
\begin{wraptable}[21]{r}{0.435\textwidth}
\vspace{-5pt}
\centering\resizebox{\linewidth}{!}{

\begin{tabular}{lllll}
Dataset & Metrics & \LR{} & \RF{} & \XGB{}\\
\midrule
\ficoinfo{} & \dmMetrics{} & \cell{r}{56.1\%\\\color{bad}{22.2\%}\\41.0\%\\36.8\%} & \cell{r}{58.3\%\\\color{bad}{31.3\%}\\31.7\%\\37.0\%} & \cell{r}{57.0\%\\\color{bad}{53.1\%}\\25.3\%\\21.6\%}\\
\midrule

\germaninfo{} & \dmMetrics{} & \cell{r}{22.9\%\\\color{bad}{7.4\%}\\74.2\%\\18.3\%} & \cell{r}{17.5\%\\\color{bad}{28.6\%}\\48.0\%\\23.4\%} & \cell{r}{22.0\%\\\color{bad}{11.8\%}\\68.2\%\\20.0\%}\\
\midrule

\givemecreditinfo{} & \dmMetrics{} & \cell{r}{24.6\%\\\color{bad}{15.6\%}\\72.4\%\\12.0\%} & \cell{r}{24.7\%\\\color{bad}{0.2\%}\\93.2\%\\6.6\%} & \cell{r}{24.8\%\\\color{bad}{11.5\%}\\76.0\%\\12.5\%}\\
\bottomrule
\end{tabular}
}
\caption{%
  Overview of paths to recourse for individuals who would receive an explanation for each dataset and model.
  We report \emph{\% Denied}, \% of denied individuals; \textcolor{bad}{\emph{\% No Recourse}}, \% of denied with a fixed prediction (i.e., who have no recourse under any explanation); \emph{\% 1-D}, \% of denied individuals who can overturn their prediction by changing 1 feature (i.e., who could have recourse from a feature-highlighting explanation); and \emph{\% n-D}, \% of denied individuals who can only overturn their prediction by changing 2 or more features simultaneously.}
\label{Table::DatasetModelMetrics}
\end{wraptable}

\begin{itemize}[itemsep=0.1em, topsep=0em,leftmargin=*]
    
    \item \emph{Feature Responsiveness} (\resp): We compute responsive scores from complete reachable sets that we enumerate using \cref{Alg::ReachableSetEnumeration}. %
    
    \item \emph{Standard Feature Attribution}: We consider model-agnostic methods that are widely used in the lending industry~\citep{FinRegLab2023}: \shap{}~\citep{lundberg2017unified}; and \lime{}~\citep{ribeiro2016should}.
    
    \item \emph{Actionable Feature Attribution}: We consider \emph{action-aware} variants of \shap{} and \lime{}: \shapaa{} and \limeaa{}. They aim to highlight responsive features by $\impscore{\xp}{j}{} \gets 0$ for immutable features.
 
\end{itemize}

\paragraph{On the Limits of Feature-Highlighting Explanations}

Our results  in \cref{Table::DatasetModelMetrics} highlight how current practices to comply with explainability mandates can help consumers achieve recourse. As shown, there is no case---i.e., for any model, any dataset, and any explanation method---where all individuals that receive feature-highlighting explanations could attain the target prediction through a single-feature intervention. Some require joint interventions. Others have no path to recourse.

On the \textds{heloc} dataset, for example, a lender who uses an \LR{} model would provide feature-highlighting explanations to 56.1\% of applicants. Among these individuals, 41.0\% could attain a desired prediction by changing single feature, 36.8\% could only do so by changing 2 or more features simultaneously, and the model assigns a fixed prediction to the remaining 22.2\%.

These results reflect the \emph{best} we can hope for when providing recourse with feature-highlighting explanations. Here, the 41.0\% of individuals who could achieve recourse by a single-feature intervention can only do so if we construct explanations with an \emph{ideal} method that assigns the highest scores to responsive features, and do not face additional actionability constraints. %

\sethlcolor{good}
\begin{table*}[t!]
\centering
    \centering
    \resizebox{\linewidth}{!}{

\begin{tabular}{ll*{5}r*{5}r}
\multicolumn{2}{c}{ } & \multicolumn{5}{c}{\LR{}} & \multicolumn{5}{c}{\XGB{}} \\
\cmidrule(l{3pt}r{3pt}){3-7} \cmidrule(l{3pt}r{3pt}){8-12}
\multicolumn{2}{c}{ } & \multicolumn{2}{c}{\regular{}} & \multicolumn{2}{c}{\filtered{}} & \multicolumn{1}{c}{ } & \multicolumn{2}{c}{\regular{}} & \multicolumn{2}{c}{\filtered{}} & \multicolumn{1}{c}{ } \\
\cmidrule(l{3pt}r{3pt}){3-4} \cmidrule(l{3pt}r{3pt}){5-6} \cmidrule(l{3pt}r{3pt}){8-9} \cmidrule(l{3pt}r{3pt}){10-11}

Dataset & 
Metrics & 
\lime{} & \shap{} & \limeaa{} & \shapaa{} & \resp & 
\lime{} & \shap{} & \limeaa{} & \shapaa{} & \resp\\

\cmidrule(lr){1-2} \cmidrule(lr){3-7} \cmidrule(lr){8-12} 

\ficoinfolong{} & \dmeMetrics{} & \cell{r}{100.0\%\\\color{bad}{92.7\%}\\7.3\%\\0.0\%\\4.0} & \cell{r}{100.0\%\\\color{bad}{77.3\%}\\22.7\%\\0.0\%\\4.0} & \cell{r}{100.0\%\\\color{bad}{76.8\%}\\23.2\%\\0.0\%\\4.0} & \cell{r}{100.0\%\\\color{bad}{70.3\%}\\29.7\%\\0.2\%\\4.0} & \cell{r}{41.0\%\\0.0\%\\100.0\%\\\cellcolor{good}{\textbf{100.0\%}}\\2.3} & \cell{r}{100.0\%\\\color{bad}{93.2\%}\\6.8\%\\0.0\%\\4.0} & \cell{r}{100.0\%\\\color{bad}{82.3\%}\\17.7\%\\0.0\%\\4.0} & \cell{r}{100.0\%\\\color{bad}{80.0\%}\\20.0\%\\0.0\%\\4.0} & \cell{r}{100.0\%\\\color{bad}{79.6\%}\\20.4\%\\0.0\%\\4.0} & \cell{r}{25.3\%\\0.0\%\\100.0\%\\\cellcolor{good}{\textbf{100.0\%}}\\2.5}\\

\cmidrule(lr){1-2} \cmidrule(lr){3-7} \cmidrule(lr){8-12}

\germaninfolong{} & \dmeMetrics{} & \cell{r}{100.0\%\\\color{bad}{91.7\%}\\8.3\%\\0.0\%\\4.0} & \cell{r}{100.0\%\\\color{bad}{100.0\%}\\0.0\%\\0.0\%\\4.0} & \cell{r}{100.0\%\\\color{bad}{59.4\%}\\40.6\%\\0.0\%\\4.0} & \cell{r}{100.0\%\\\color{bad}{65.1\%}\\34.9\%\\0.0\%\\4.0} & \cell{r}{74.2\%\\0.0\%\\100.0\%\\\cellcolor{good}{\textbf{100.0\%}}\\1.8} & \cell{r}{100.0\%\\\color{bad}{100.0\%}\\0.0\%\\0.0\%\\4.0} & \cell{r}{100.0\%\\\color{bad}{99.1\%}\\0.9\%\\0.0\%\\4.0} & \cell{r}{100.0\%\\\color{bad}{70.5\%}\\29.5\%\\0.0\%\\4.0} & \cell{r}{100.0\%\\\color{bad}{67.3\%}\\32.7\%\\0.0\%\\4.0} & \cell{r}{68.2\%\\0.0\%\\100.0\%\\\cellcolor{good}{\textbf{100.0\%}}\\1.8}\\

\cmidrule(lr){1-2} \cmidrule(lr){3-7} \cmidrule(lr){8-12}

\givemecreditinfolong{} & \dmeMetrics{} & \cell{r}{100.0\%\\\color{bad}{65.5\%}\\34.5\%\\0.0\%\\4.0} & \cell{r}{100.0\%\\\color{bad}{46.8\%}\\53.2\%\\0.0\%\\4.0} & \cell{r}{100.0\%\\\color{bad}{56.0\%}\\44.0\%\\0.0\%\\4.0} & \cell{r}{100.0\%\\\color{bad}{33.1\%}\\66.9\%\\22.8\%\\4.0} & \cell{r}{72.4\%\\0.0\%\\100.0\%\\\cellcolor{good}{\textbf{100.0\%}}\\2.4} & \cell{r}{100.0\%\\\color{bad}{41.8\%}\\58.2\%\\0.0\%\\4.0} & \cell{r}{100.0\%\\\color{bad}{43.3\%}\\56.7\%\\0.0\%\\4.0} & \cell{r}{100.0\%\\\color{bad}{31.6\%}\\68.4\%\\4.2\%\\4.0} & \cell{r}{100.0\%\\\color{bad}{30.6\%}\\69.4\%\\13.2\%\\4.0} & \cell{r}{76.0\%\\0.0\%\\100.0\%\\\cellcolor{good}{\textbf{100.0\%}}\\2.6}\\

\cmidrule(lr){1-2} \cmidrule(lr){3-7} \cmidrule(lr){8-12} 
\end{tabular}
}
    \caption{Responsiveness of feature-highlighting explanations for \LR{} and \XGB{} models for all methods and datasets. We defer results for \RF{} to \cref{Appendix::RFResults} for clarity. For each model, we generate explanations that highlight up to 4 top-scoring features from a given method. We report the proportion of individuals receiving an explanation (\emph{\% Presented with Explanations}) and the mean number of features in each explanation (\emph{\# Features Highlighted}). We also show the proportion of instances where all features are unresponsive (\emph{\% All Features Unresponsive}) highlighting {\color{bad}{positive values}}; at least one feature is responsive (\emph{\% At Least 1 Feature Responsive}), or all features are responsive (\emph{\% All Features Responsive}) highlighting the \textbf{\hl{best value}}.}
    \label{Table::MainMetrics}
\end{table*}

\paragraph{On Explanations with Feature Attribution Scores}

Our results show how standard methods for feature attribution can highlight features that are uninformative or misleading. Given the \LR{} model on the \textds{heloc} dataset, we find that 92.7\% and 77.3\% of explanations from \lime{} and \shap{} fail to highlight even one responsive feature. This stems from two issues:
\begin{itemize}[topsep=0pt, itemsep=0.1em]

\item \emph{Low Scores for Responsive Features}: Under the \LR{} model on the \textds{heloc} dataset, 41.0\% of denied individuals could be approved by altering a single feature. However, \lime{} and \shap{} do not highlight these features because they assign higher scores to other features (see \cref{Appendix::TopKPlots}). 

\item \emph{Fixed Predictions}: Under the \LR{} model on the \textds{heloc} dataset, 22.2\% of denied individuals cannot be approved under any intervention as they receive fixed prediction. These are instances where \lime{} and \shap{} (and their variants) can inflict harm by highlighting mutable features. For example, one individual who is assigned a fixed prediction would receive an explanation that highlights mutable features such as \textfn{AvgYearsInFile} and \textfn{NetFractionRevolvingBurden} under \shap{}, which gives the impression that intervening on them could lead to approval.

\end{itemize}

\paragraph{On Adapting Existing Methods}
Seeing how feature attribution methods like \lime{} and \shap{} can highlight features that are important but immutable, we study the potential to improve responsiveness using \emph{action-aware} variants \shapaa{} and \limeaa{}. Following a common belief that we can enforce actionability post-hoc~\citep[e.g.,][]{mothilal2020explaining}, we construct explanations using only actionable features. In \cref{Table::MainMetrics}, we see that \shapaa{} and \limeaa{} can highlight more responsive features. Given the \LR{} model in \textds{heloc}, for example, this strategy improves the proportion of explanations that contain at least one responsive feature by 7.0\% (i.e., $29.7\%$ of \shapaa{} vs. $22.7\%$ for \shap{}). One shortcoming of this approach is that we must filter features based on their actionability for all individuals, which may overlook features that is actionable for some individuals but not others.

\paragraph{On Explanations with Responsiveness Scores}
Our results show how practitioners can use our approach to comply with regulatory requirements and address the limitations of feature attribution methods.
When we construct feature-highlighting explanations using responsiveness scores, we present individuals with explanations that only contain responsive features (100\% on the \emph{\% All Features Responsive} metric across datasets and models in \cref{Table::MainMetrics}).
In contrast, only 0.2\% of \shapaa{} explanations of the \LR{} model in \textds{heloc} were fully responsive. For the remaining 99.8\%, each explanation contains at least one unresponsive feature that could lead individuals to intervene without achieving the target prediction.

Explanations based on responsiveness scores contain the \emph{most} responsive features that one can change independently to achieve recourse. In effect, we only provide explanations to individuals who can achieve recourse with a single-feature intervention. This may result in explanations that highlight fewer features on average. For example, \resp{} explanations for the \LR{} model on \textds{heloc} contained an average of 2.3 (out of 4) features. This behavior can mitigate harm as we avoid presenting explanations to individuals with fixed predictions (i.e., cannot change their predictions), or to individuals who could only do so with joint interventions.

\section{On the Limits of Feature Highlighting for Recourse}
\label{Sec::Demos}

\newtcbox{\hlrb}[1][]{
  on line,
  arc=2pt,
  colback=yellow!20,
  colframe=yellow!60!black,
  boxrule=0.1pt,
  left=0.01ex, right=0.01ex, top=0.01ex, bottom=0.01ex,
  enhanced, nobeforeafter, tcbox raise base,
  #1
}
\newcommand{\feat}[1]{{\small\textsf{{#1}}}}

Current mandates only require explanations to contain a list of most important features. In effect, most explanations lack details on \emph{how} individuals should intervene on them. Consider a person who receives an explanation that highlights \feat{income}. In this case, most consumers would assume that increasing \feat{income} would eventually lead to approval. In practice, this can yield counterproductive results---a responsive feature may be responsive in a way that is not monotonic (e.g., increase \feat{income} by at least \$1,000 but not more than \$2,500) or intuitive (e.g., decrease \feat{income} to be approved). As seen in this example, feature highlighting explanations can only provide recourse if features are: (1) responsive, (2) monotonic and (3) intuitive.

Building upon our results from \cref{Sec::Experiments}, \textbf{we show that even when methods like LIME and SHAP highlight responsive features, the necessary interventions to obtain recourse are not immediately apparent to consumers}. These results also highlight how we can use our machinery to check for more complicated notions of responsiveness (e.g., monotonicity).

\paragraph{Setup}
We use the same setup in \cref{Sec::Experiments} and fit an \XGB{} model for a version of the \textds{givemecredit} dataset $n = 23,459$ where we do not binarize continuous features. We construct feature highlighting explanations for each individual denied credit that contain up to four features based on scores from \shap{}, \lime{}, and \resp{}.
We estimate the responsiveness score (\resp{}) for each feature using a sample of $N = 500$ reachable points $\Rjset{\xp}{j}$ that we generate using \cref{Alg::ReachableSetSampling}. Our choice of $N = 500$ ensures that there is a 99\% chance that a feature that we claim is unresponsive has a true responsiveness $\leq 0.01$---i.e., at most of 1\% of actions could lead to recourse (see \cref{Appendix::SamplingDetails}).
In addition to measuring responsiveness of each feature under arbitrary changes, we use $\Rjset{\xp}{j}$ to evaluate whether responsiveness is monotonic or intuitive. 
We verify that a feature is intuitively responsive by finding at least one reachable point in the intuitive direction that attains the target prediction. 
We verify monotonicity of responsiveness by searching for a threshold value where all reachable points with feature $j$ less/greater than the threshold attain the target prediction. 
When we construct explanations with \resp{} through $\Rjset{\xp}{j}$, we only include features satisfying the necessary conditions for recourse depending on the additional information we assume is provided; if none exist, we don't not construct an explanation.

\paragraph{On the Need for Additional Information}

Our results reveal that omitting additional information on interventions undermines the value of feature-highlighting explanations for recourse. In \cref{tab::demoresults}, we see that there is no case where an individual can reliably achieve recourse using a feature-highlighting explanation based on \shap{} or \lime{}. In particular, 0\% of individuals receive explanations where all four feature are responsive, monotonic, and intuitive, and only 6\% of individuals receive an explanation where at least 1 feature obeys these conditions (\textbf{Features Only}).
This is either because interventions only lead to the target prediction when a feature takes on very specific values, or because the feature must be changed in counterintuitive ways.

\emph{Degree of Change}: Explanations can fail to provide a reliable path to recourse when they do not include information on \emph{how much} to change the value of responsive features. Consider credit utilization (\feat{CreditUtil}), which is in 32.3\% of explanations built using \shap{} scores. Our analysis reveals that it is responsive in 10.7\% of cases, but responsive \emph{and monotonic} in 9.0\% of cases. In other words, 1.7\% of individuals can only reliably intervene on this feature to attain a desired prediction when they have additional information on how much to change their credit utilization.
For instance, we point to an individual with $\feat{CreditUtil}=0.99$ who would be approved if they decrease the value of this feature to $\feat{CreditUtil} \in  (0.00, 0.50) \cup (0.65, 0.68)$. Without information on the degree of change, they may still be denied if they decrease their usage to the values of $\feat{CreditUtil} \cup (0.50, 0.65)$.

\emph{Direction of Change}: Explanations can also fail to provide a reliable path to recourse when they do not inlcude information on whether the decision subject should intervene on features by increasing or decreasing them. In this case, individuals who are shown responsive features may fail to obtain the target prediction because they must change features in a counterintuitive direction. Consider \feat{Income}, which is in 48.1\% of explanations built using \shap{} and is responsive in 20.6\% of all cases (i.e., ~43\% of explanations with \feat{Income}). In general, an individual who is denied credit and shown this feature would naturally assume that they can be approved by increasing this value. Yet, our analysis reveals this is only the case for 9.9\% of individuals; the remaining 10.7\% of individuals, where \feat{Income} is responsive, can only be approved by \emph{decreasing} \feat{Income}.

\newcommand{\includedistplot}[1]{\cell{c}{%
\includegraphics[trim=5px 12px 0px 5px,clip,height=0.065\textheight]{#1}}
}

\newcommand{\includedistplotnoaxis}[1]{\cell{c}{%
\includegraphics[trim=25px 12px 0px 10px,clip,height=0.08\textheight]{#1}}
}

\begin{table}[!t]
\centering
\scriptsize
\resizebox{0.95\textwidth}{!}{
\begin{tabular}{@{}>{\scriptsize}l@{\;}c@{}c@{}c@{}}
& \multicolumn{1}{c}{Best Case}
& \multicolumn{2}{c}{Current Practices}
\\
\cmidrule(lr){2-2} \cmidrule(lr){3-4}
\cell{l}{%
Information Provided to Consumers
} & 
\cell{c}{{\scriptsize\textsf{RESP}}} &
\cell{c}{{\scriptsize\textsf{LIME}}} &
\cell{c}{{\scriptsize\textsf{SHAP}}} \\
\midrule
\cell{l}{%
\textbf{Features Only}: \\
Features must be \emph{responsive, monotonic,}\\
\emph{intuitive} for decision subjects to obtain\\
recourse from explanations
} &
\includedistplot{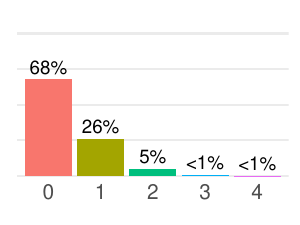} &
\includedistplot{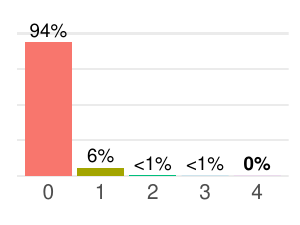} &
\includedistplot{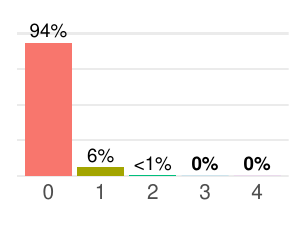} \\
\midrule
\cell{l}{%
\textbf{Features + Direction of Change}: \\
Features must be \emph{responsive, monotonic}\\
for decision subjects to obtain recourse\\
from explanations
} &
\includedistplot{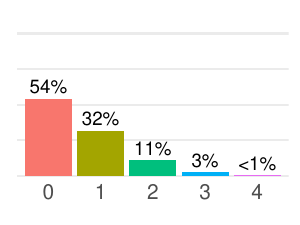} 
& \includedistplot{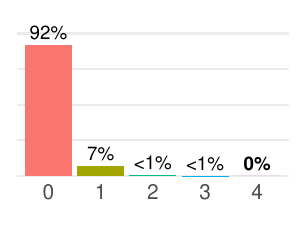} & 
\includedistplot{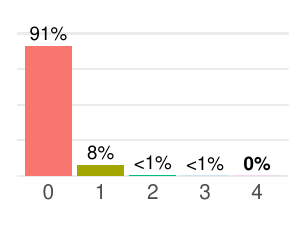} 
\\
\midrule
\cell{l}{%
\textbf{Features + Direction + Degree of Change}: \\
Features must be \emph{responsive} for\\
decision subjects to obtain recourse\\
from explanations
} &
\includedistplot{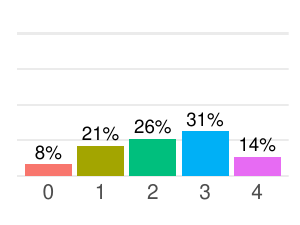} &
\includedistplot{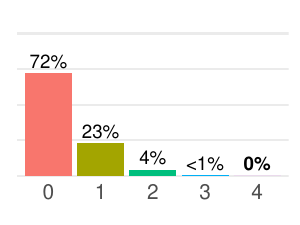} &
\includedistplot{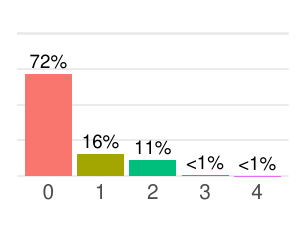} \\

\end{tabular}
}

\vspace{-1em}
\caption{Distribution of features that are responsive, monotonic and intuitive in feature-highlighting explanations using \lime{}, \shap{}, \resp{}. We plot the percentage of explanations given to consumers denied credit by the \XGB{} model in the \textds{\cp{givemecredit}} dataset with $k \in \{0, \ldots, 4\}$ features that are \emph{responsive} (can be changed to attain recourse), \emph{monotonic} (all changes below or beyond a threshold value will lead to recourse), and \emph{intuitive} (will lead to recourse if they are changed in a direction that aligns with common expectations).
We show that 94\% of consumers who receive feature-highlighting explanations built from \lime{} and \shap{} are unlikely overturn their predictions.
We can reduce this value to 72\% by including additional information on the degree and direction of change.
Under \resp{}, we only include features that meet the necessary conditions for recourse; if no features meet these conditions, we do not provide explanations for the consumer. Including additional information will decrease this proportion to 8\%.
}
\label{tab::demoresults}
\end{table}

\paragraph{Discussion}

When we construct explanations using methods like \shap{} and \lime{}, we cannot reliably tell when features are responsive, monotonic, or intuitive. In contrast, with \resp{}, we can verify whether each feature is monotonically or intuitively responsive through its reachable set $\Rjset{\xp}{j}$. If there are no features that meet these conditions, we would refrain from providing explanations. 
Hence, \resp{} represents ``optimal conditions'' for feature-highlighting explanations. However, we see that, without details on the magnitude or the direction of change, highlighting features alone is unlikely to provide recourse. Specifically, we cannot provide explanations to 68\% of denied individuals (\cref{tab::demoresults}). They would not benefit from feature-highlighting explanations because none of their features are responsive, monotonic and intuitive.
These results underscore the need to include additional information when explanations are meant to support recourse.

We could impose some of these conditions by enforcing constraints on how we train the model---i.e., we could ensure monotonicity by using a linear classifier like \LR{} rather than \XGB{}. Alternatively, we could use custom responsiveness scores to highlight features that meet all these conditions by inspecting their reachable sets $\Rjset{\xp}{j}$, as we have done in this section.
The \resp{} column in \cref{tab::demoresults} shows how effective custom responsiveness scores can be as a standalone solution. When there are no features that satisfy these conditions, we could highlight features that achieve weaker forms of responsiveness alongside additional information. In \cref{tab::demoresults}, we see that including additional information can reduce the proportion of cases we refrain from explaining to 8\%. This provides an alternative approach to ensure these conditions in a way that would not interfere with model development.

\section{Concluding Remarks}
\label{Sec::ConcludingRemarks}

Explanations are often seen as safeguards in consumer-facing applications as they can reveal information that can help exercise their broader rights pertaining to anti-discrimination, correction of erroneous information and recourse~\citep[][]{edwards2017slave}.

However, our findings suggest that current mandates may be insufficient in achieving their stated goals.
We showed that feature-highlighting explanations can fail to help consumers achieve a target prediction---providing \emph{reasons without recourse}. This arises because feature attribution methods like \shap{} and \lime{} overlook whether features are actionable and highlight immutable ones. They fail to capture counterfactual behavior and highlight unresponsive features. These explanations may also lead to harm by explaining fixed predictions. 
We further demonstrated that even when explanations highlight responsive features, feature-highlighting explanations may still fail to provide recourse if they omit information on how to change each feature. 

Evaluating whether mandates accomplish their goals is especially important as explanation mandates are often designed to achieve objectives that can be addressed using other techniques---e.g., anti-discrimination via auditing~\citep{saleiro2018aequitas,bellamy2019ai,skirzynski2025discrimination} or searching for less discriminatory models~\citep{black2024less,gillis2024operationalizing}.

\paragraph{Use Cases for Responsiveness Scores}
Our work has primarily focused on consumer finance applications because there are long-standing regulations on explanations and recourse in place. More broadly, we can draw on responsiveness scores to describe how models behave with respect to user interactions. 
\begin{itemize}
    \item \emph{Concept Annotations}: We can use responsiveness scores to select which concepts to confirm when performing test-time interventions under a constrained budget in concept-based models~\citep{koh2020concept,joren2024classification}. Interventions can prioritize confirming concepts with high responsiveness.
    \item \emph{Selective Feature Reporting}: Users can decide whether to provide optional features for personalized predictions based on responsiveness scores~\citep{joren2023participatory}; a high responsiveness score suggests that it is beneficial for the user to report the feature.
    \item \emph{Model Steering}: Users can identify features they can use to collectively steer model behavior with responsiveness scores~\citep{hardt2023algorithmic}. 
    \item \emph{Strategic Classification}: Model developers can preemptively identify features that users can manipulate to ``game'' their predictions and act upon them (e.g., make responsive features immutable on the platform)~\citep{hardt2016strategic}.
\end{itemize}

\paragraph{Limitations}

In applications like lending, actionability of features can be ambiguous; practitioners need to make assumptions on how features can change. In this work, we measured the responsiveness of features with respect to a conservative set of assumptions---indisputable constraints on how individuals can change their features. In this regime, our responsiveness scores can flag individuals with fixed predictions; but, we may not guarantee recourse as consumers may face additional constraints that we did not enforce. In practice, we can mitigate these issues by highlighting features that exceed a minimal level of responsiveness, or by eliciting constraints from decision subjects~\citep[see e.g.,][]{esfahani2024exploiting,de2022personalized,koh2024understanding}. 

\clearpage
\nottoggle{blind}{%
\subsection*{Acknowledgements}
This work is supported by the National Science Foundation under award IIS-2313105, and the NIH Bridge2AI Center Grant U54HG012510.%
}

\bibliographystyle{ext/iclr2025_conference}
\small\bibliography{responsiveness_scores}%

\clearpage
\appendix

\mtcaddpart[Supplementary Materials]

\thispagestyle{empty}
{%
\bfseries\Large\vspace{-2.0em}%
\begin{center}
Supplementary Materials
\end{center}%
\hrule
}
\renewcommand\ptctitle{}
\mtcsetfeature{parttoc}{open}{}%
\setlength{\ptcindent}{0pt}
\noptcrule
\parttoc[c]

\clearpage
\section{Supplementary Material for Section \ref{Sec::Computation}}
\label{Appendix::MIP}

\newcommand{\parti}[0]{\pi}
\newcommand{\jpart}[0]{\parti'}
\newcommand{\jpartnoj}[0]{\parti' \setminus \{j\}}

In this section, we provide additional implementation details for our methods to compute responsiveness scores in \cref{Sec::Computation}.

\paragraph{Partitions} Our implementation of algorithms in \cref{Sec::Computation} partitions the feature space into disjoint sets. Each part is made up of features that share joint constraints. More formally, we define a partition $\{\parti_1, \parti_2, \ldots, \parti_k \}$ of $[d]$ such that given two parts $\parti_m, \parti_n$, there are no joint constraints between all pairs $(p,q) \in \parti_m \times \parti_n$ of features. Another way to think about feature partitions would be as connected components in a graph, where features are nodes and edges represent joint constraints (i.e., $\exists \textrm{ edge } (p, q) \iff $ there are joint actionability constraints between $p$ and $q$).

In what follows, we denote $\jpart$ as a part that contains $j$ (i.e., $j \in \jpart$).

\subsection{Implementation Details for Reachable Set Enumeration}
\label{Appendix::FindMIP}

\paragraph{Description of {\normalfont\textsf{Find1DAction}} Routine}

The \textroutine{Find1DAction} routine in \cref{Alg::ReachableSetEnumeration} enumerates a set of possible actions from an intervention on single feature by recovering all possible solutions for an optimization problem of the form:
\begin{align*}
    \find{\xp, \runningaset} :=  \argmin_{\ab}  \|\ab\|_1 \ \st \ \ab \in \Ajset{\xp}{j}
\end{align*}
The routine takes as input:
\begin{itemize}
    \item $\xp \in \X$, a point
    \item $\Ajset{\xp}{j}$, the action set for feature $j$, representing both actionability constraints and feasible actions.
    \item $\Knogood{}$, a set of $\intrange{L}:=|\Knogood{}|$ actions enumerated over previous iterations
\end{itemize} 
Since actionability constraints specified in $\Ajset{\xp}{j}$ precisely define feasible actions, we overload notation to allow $\Ajset{\xp}{j}$ to represent the set of feasible actions.

At each iteration, it searches for the nearest single-feature action from the set $\ab \in \Ajset{\xp}{j}$ by solving a mixed-integer program formulation shown in \cref{MIP::FP}. The optimization procedure returns the nearest action when it exists, or returns a certificate of infeasibility, which indicates that there are no more actions to enumerate. As detailed in \cref{Alg::ReachableSetEnumeration}, if we find a solution $\ab^*$, we remove it from $\Ajset{\xp}{j}$ for the next iteration. In practice, we add solutions from each iteration to $\Knogood{}$ and add constraints with respect to each solution:

\begin{subequations}
\label{MIP::FP}
\footnotesize
\renewcommand{\arraystretch}{1.5}
\begin{equationarray}{@{}c@{}r@{\,}c@{\,}l>{\,}l>{\,}l@{\;}}
\min_{\ab} & \quad \sum_{k \in \jpart} &  & \akpos + \akneg &  & \notag \\[1.5em]
\st 
& a_j & \neq & 0 &  & \mipwhat{intervene on $j$} \label{Con::FP::MatchInter}\\
& a_k & = & \akpos - \akneg & k \in \jpart & \mipwhat{reconstruction of $a_k$} \label{Con::FP::Recon}\\
& \akpos & \geq & a_k & k \in \jpart & \mipwhat{positive component of $a_k$} \label{Con::FP::AbsValue1}\\
& \akneg & \geq & -a_k & k \in \jpart & \mipwhat{negative component of $a_k$} \label{Con::FP::AbsValue2} \\
& \akpos & \leq & \left|\sup_{\ab' \in \Ajset{\xp}{j}} a_k' \right|\sigma_k & k \in \jpart & \mipwhat{$\akpos > 0 \implies \sigma_k = 1$} \label{Con::FP::SepUB}\\
& \akneg & \leq & \left|\inf_{\ab' \in \Ajset{\xp}{j}} a_k' \right|(1-\sigma_k) & k \in \jpart & \mipwhat{$\akneg > 0 \implies \sigma_k = 0$} \label{Con::FP::SepLB}\\
& a_k  & = & a_{k,l} + \dpos{k}{l} - \dneg{k}{l} &  k \in \jpart, \aopt{l}\in \Knogood    & \mipwhat{maintain distance from prior actions} \label{Con::FP::ReachableSet1} \\
& \epsmin{} & \leq & \sum_{k \in \jpart} (\dpos{k}{l} +  \dneg{k}{l}) & \aopt{l}\in \Knogood  & \mipwhat{any solution is  $\epsmin$ away from $\aopt{l}$} \label{Con::FP::ReachableSet2}\\
& \dpos{k}{l} & \leq & M^{+}_{k,l} \chgind{k,l}   & k \in \jpart, l \in \intrange{L} & \mipwhat{$\dpos{k}{l} >0 \implies \chgind{k,l} = 1 $} \label{Con::FP::ReachableSet3}\\
& \dneg{k}{l} & \leq & M^{-}_{k,l}(1-\chgind{k,l})  & k \in \jpart, l \in \intrange{L} & \mipwhat{$\dneg{k}{l}>0 \implies \chgind{k,l} = 0$} \label{Con::FP::ReachableSet4}\\
& \ab & \in & \Ajset{\xp}{j} & & \mipwhat{joint actionability constraints on $j$} \label{Con::FP::SepActionSet}  \\
& \akpos, \akneg & \in & \R_{+} & k \in \jpart & \mipwhat{absolute value of $a_k$} \label{Con::FP::Abs1}  \\
& \dpos{k}{l},\dneg{k}{l} & \in & \R_{+} & k \in \jpart, l \in \intrange{L} & \mipwhat{signed distances from $a_{k,l}$} \label{Con::FP::SigneDelta}\\ 
& \chgind{k,l} & \in & \B & k \in \jpart, l \in \intrange{L}  & \mipwhat{sign indicator of $\delta_{k,l}$} \label{Con::FP::DeltaSign} \\
& \sigma_k & \in & \B & k \in \jpart  & \mipwhat{sign indicator of $a_k$} \label{Con::FP::ASign}
\end{equationarray} 
\end{subequations}

This formulation finds the closest feasible action that at $\epsmin{}$ away from each action in the set $\Ajset{\xp}{j}$. Here, the objective minimizes the $L_1$-norm of $a_k$ in terms of its positive and negative $\akpos - \akneg$, which are defined in constraints \eqref{Con::FP::Recon} and \eqref{Con::FP::Abs1}. 
The constraints enforce a minimum distance between $a_k$ and the $l^\textrm{th}$ solution from the set $\Knogood{}$ in terms of the distance variables $\dpos{k}{l}$ and $\dneg{k}{l}$, which are defined in constraints \eqref{Con::FP::ReachableSet1} and  \eqref{Con::FP::SigneDelta}.
Here, $\sigma_k:=\indic{a_k > 0}$ and $\chgind{k,l}:=\delta_{k,l}$ are binary variables set to 1 when $a_k$ and $\delta_{k,l}$ have positive signs, respectively. The formulation ensures these variables to ensure that signed components can have a positive value through constraints \eqref{Con::FP::ASign} and  \eqref{Con::FP::DeltaSign}, respectively.

The first constraint, \eqref{Con::FP::MatchInter} enforces that we intervene on feature $j$. The remaining constraints describe three key requirements for $\ab$:
\begin{enumerate}
    \item Sufficient distance from prior solutions (constraint \eqref{Con::FP::ReachableSet2})
    \item Adherence to separable actionability constraints (constraint \eqref{Con::FP::SepUB}, \eqref{Con::FP::SepLB}, \eqref{Con::FP::ReachableSet3}, \eqref{Con::FP::ReachableSet4})
    \item Adherence to joint actionability constraints (constraint \eqref{Con::FP::SepActionSet})
\end{enumerate}

Constraint \eqref{Con::FP::ReachableSet2} ensures that given $\epsmin{} > 0$, $\|\ab - \ab_l\|_1 \geq \epsmin{} \; \forall \, \ab_l \in \Knogood$. We set $\epsmin{} = 0.5$ for our experiments with discrete datasets.

Constraints \eqref{Con::FP::SepUB}, \eqref{Con::FP::SepLB} ensure that $a_k$ is feasible under separable constraints on $k \in \jpart$ and that only one of $\akpos$ or $\akneg$ is strictly positive. Similarly, constraints \eqref{Con::FP::ReachableSet3}, \eqref{Con::FP::ReachableSet4} ensure that the distances between $\ab$ and each $\ab_l$ are within some bound. We achieve this by setting ``Big-M'' parameters $M^{+}_{k,l}, M^{-}_{k,l}$, which represent the upper bound for $\dpos{k}{l}$ and $\dneg{k}{l}$. For each feature $k \in \jpart$, we let
\[
    M^{+}_{k,l} := \left|\sup_{\ab' \in \Ajset{\xp}{j}} a_k' - a_{k,l}\right|, \
    M^{-}_{k,l} := \left|\inf_{\ab' \in \Ajset{\xp}{j}} a_k' - a_{k,l}\right|, \
\]
Along with the indicator variable $\chgind{k,l}$, $M^{+}_{k,l}, M^{-}_{k,l}$ ensure that only one of $\dpos{k}{l}$ or $\dneg{k}{l}$ is strictly positive and is feasible under separable actionability constraints. 

Constraint \eqref{Con::FP::SepActionSet} ensures that $\ab$ also adheres to joint actionability constraints. These constraints will exist if and only if $|\jpart| > 1$. See \citep{kothari2024prediction} for examples of how we can explicitly encode joint actionability constraints into \cref{MIP::FP}.

This formulation is adapts the MIP in \citep{kothari2024prediction} for a task where we only need to enumerate actions with respect to a single-feature intervention $\vb$.

\subsection{Implementation Details for Reachable Set Sampling}
\label{Appendix::SamplingDetails}

The sampling algorithm (\cref{Alg::ReachableSetSampling}) requires additional considerations---most notably the sample size $N$.

\paragraph{Choosing a Sample Size}

The sample size $\sampsize$ controls the precision of the estimated responsiveness score $\rscorehat{\xp}{j}$. We formalize precision using confidence intervals by treating $\rscorehat{\xp}{j}$ as a binomial distribution parameter:
\begin{remark}
\label{Def::EstimatedRScore}
    Given a point $\xp \in \X$, let $\Rjhat{\xp}{j}$ denote a sample of $\sampsize{}$ points drawn uniformly from the reachable set $\Rjset{\xp}{j}$. Given any model $\clf{}: \X \to \Y$, we can estimate the responsiveness score for feature $j$ as 
    $\rscorehat{\xp}{j} := \tfrac{1}{\sampsize} \sum_{\xq \in \Rjhat{\xp}{j}} \indic{\clf{\xq} = \ytarget}.$
    Given a significance level $\alpha \in (0,1)$, we have that:
    \begin{align*}
        \Pr{\rscore{\xp}{j} \in [\rscoretilde{\xp}{j} - \mathcal{E}, \rscoretilde{\xp}{j} + \mathcal{E}] } \geq 1 - \alpha
    \end{align*}
    Here: $\mathcal{E} := \kappa \sqrt{\frac{1}{\sampsize + \kappa^2} \rscoretilde{\xp}{j}(1-\rscoretilde{\xp}{j})}$ and $\rscoretilde{\xp}{j} := \frac{1}{\sampsize + \kappa^2} \left( S + \frac{\kappa^2}{2}\right)$ is a corrected estimator to improve coverage when $\rscore{\xp}{j} \in \{0, 1\}$~\citep[][]{brown2001interval}, $\success \defeq{} |\{\xq \in \Rjhat{\xp}{j} \mid \clf{\xq} = \ytarget\}|$ is the subset of responsive points, and $\kappa \defeq{} \Phi^{-1}(1-\tfrac{\alpha}{2})$ is a constant based on the Normal CDF $\Phi(\cdot).$
\end{remark}

The Agresti–Coull interval above is an approximate confidence interval for a binomial proportion~\citep{agresti1998approximate}, offering an improvement over the standard normal approximation known as the Wald Interval. It is particularly effective for small proportion values, providing more reliable coverage---the probability that the interval contains the true parameter value~\citep{brown2001interval}.

\begin{fact}
\label{Remark::CalcAB}
Given $\alpha$ and $\sampsize$, $\mathcal{E}$ is maximized when $S = \frac{N}{2}$ and attains its minimum at $S = 0$ and $S = N$.
\end{fact}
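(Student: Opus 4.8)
The plan is to reduce the claim to the elementary fact that $t \mapsto t(1-t)$ is a concave parabola, by isolating the only way $S$ enters $\mathcal{E}$. With $\kappa$ and $\sampsize$ held fixed, write $p := \rscoretilde{\xp}{j} = \frac{S + \kappa^2/2}{\sampsize + \kappa^2}$, so that
\[
\mathcal{E} = \kappa\sqrt{\tfrac{1}{\sampsize+\kappa^2}}\,\sqrt{p(1-p)}.
\]
First I would check that $p \in (0,1)$ for every $S \in [0,\sampsize]$: since $\alpha \in (0,1)$ gives $\kappa = \Phi^{-1}(1-\tfrac{\alpha}{2}) > 0$ and $\sampsize \geq 1$, the quantity $p$ ranges over $\left[\frac{\kappa^2/2}{\sampsize+\kappa^2},\, \frac{\sampsize+\kappa^2/2}{\sampsize+\kappa^2}\right]$, whose endpoints lie strictly inside $(0,1)$. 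Hence $p(1-p)>0$, the square root is real, and—because $t \mapsto \sqrt{t}$ is strictly increasing on $[0,\infty)$ and the prefactor $\kappa\sqrt{1/(\sampsize+\kappa^2)}$ is a positive constant—$\mathcal{E}$ is a strictly increasing function of $p(1-p)$. It therefore suffices to locate the maximizer and minimizers of $p(1-p)$.

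Next I would exploit two structural observations. The map $S \mapsto p$ is affine and strictly increasing, and the range of $p$ is symmetric about $\tfrac12$: writing $p_{\min} := \frac{\kappa^2/2}{\sampsize+\kappa^2}$, the upper endpoint equals $\frac{\sampsize+\kappa^2/2}{\sampsize+\kappa^2} = 1 - p_{\min}$. Since $h(p):=p(1-p)=\tfrac14 - (p-\tfrac12)^2$ is strictly concave with its unique maximum at $p=\tfrac12$ and is strictly decreasing in $|p-\tfrac12|$, it attains its largest value on $[p_{\min}, 1-p_{\min}]$ at $p=\tfrac12$ and its smallest value at the two endpoints. Solving $p=\tfrac12$ gives $S + \kappa^2/2 = (\sampsize+\kappa^2)/2$, i.e. $S = \sampsize/2$; the endpoints $p = p_{\min}$ and $p = 1-p_{\min}$ correspond to $S=0$ and $S=\sampsize$. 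By the symmetry $h(p_{\min}) = h(1-p_{\min})$, both endpoints yield the same minimal value, so $S=0$ and $S=\sampsize$ are simultaneously minimizers.

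Transferring back through the monotone dependence of $\mathcal{E}$ on $p(1-p)$ then yields the claim: $\mathcal{E}$ is maximized at $S=\sampsize/2$ and minimized at $S\in\{0,\sampsize\}$. I do not expect any genuine obstacle here; the only points requiring care are (i) verifying $p\in(0,1)$ so that the square root and its strict monotonicity may legitimately be invoked, and (ii) noting that the range of $p$ is symmetric about $\tfrac12$, which is exactly what guarantees that \emph{both} endpoints—rather than only one—attain the minimum. One could alternatively differentiate $\mathcal{E}^2$ with respect to $S$ and check the sign of $\tfrac12 - p$, but the concavity-plus-symmetry argument avoids calculus and makes the equal-endpoint behavior transparent.
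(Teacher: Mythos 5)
Your proof is correct, and it follows the same core reduction as the paper: both isolate the dependence on $S$ through the quantity $p = \frac{S+\kappa^2/2}{N+\kappa^2}$ and reduce the claim to locating the extrema of $p(1-p)$ over the range of $p$. Where you differ is in the elementary step: the paper differentiates $h(z) = c\sqrt{z(1-z)}$ twice, verifies concavity, and then argues separately on $(0,\tfrac12]$ and $[\tfrac12,1)$ that the minima sit at the endpoints, asserting at the end that the two endpoint values coincide. You instead complete the square, $p(1-p) = \tfrac14 - (p-\tfrac12)^2$, and observe that the attainable range of $p$ is symmetric about $\tfrac12$ (its endpoints are $p_{\min}$ and $1-p_{\min}$), which makes the equality of the two endpoint values immediate rather than something to be checked. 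Your version also explicitly verifies $p \in (0,1)$ before invoking the square root and its strict monotonicity, a point the paper leaves implicit. The calculus-free argument is marginally more transparent about \emph{why} $S=0$ and $S=N$ tie for the minimum; the paper's derivative computation buys nothing extra here. No gaps in either direction.
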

\begin{proof}
    Let $z = \frac{S + \frac{\kappa^2}{2}}{N + \kappa^2}$. Then, we have:
    \[
    \mathcal{E} = \kappa\sqrt{\frac{z(1-z)}{(\sampsize+\kappa^2)}}
    \]
    Since $\sampsize$ and $\alpha$ are fixed ($\kappa$ is defined by $\alpha$), $\mathcal{E}$ can be expressed as a function of $z$ of the form \(h(z) = c\sqrt{z(1-z)}\) where $c \in \R^+$. We observe that h(z) is a concave function whose first derivative can be expressed as:
    \begin{align*}
        h'(z) = %
        \frac{c(1-2z)}{2\sqrt{z(1-z)}} \qquad 
         h''(z) = -\frac{c}{4}[z(1-z)]^{-\frac{3}{2}}
    \end{align*}
    Since $z > 0$ and $c > 0$, we can see that $h(z)$ attains a maximum value at $z' = \frac{1}{2}$ since $h(z') = 0$ and $h''(z) < 0$. 

    We see that $h'(z) > 0$ where $z < \frac{1}{2}$, meaning it is increasing for $z \in (0,\frac{1}{2}]$. Thus, the local minimum is achieved at the smallest possible $z$---when $S = 0$.

    Similarly, for $z \in [\frac{1}{2}, 1)$, $h'(z) < 0$ and the local minimum is achieved at the largest possible $z$---when $S = N$.

    Note that the value of $h$ (or $\mathcal{E}$) are the same at those two points.
    
\end{proof}

Using the \cref{Remark::CalcAB}, we can a sample size $\sampsize$ in terms of $\alpha$ in following ways.
\begin{enumerate}
    \item Control the precision when $S = 0$ (i.e., no points in $\Rjhat{\xp}{j}$ are responsive) $\iff$ control the width of the shortest interval
    \item Control the precision when $S = \tfrac{\sampsize}{2}$ (i.e., half of the points in $\Rjhat{\xp}{j}$ are responsive) $\iff$ control the width of the widest interval
\end{enumerate}
Either way, we fix $\alpha$ and solve for $\sampsize$ given the width of the interval $\mathcal{E}$ at a specified $S$. Below we provide a table of the smallest $\sampsize$ needed for different $\mathcal{E}$---interval widths---at common values of $\alpha$ for the two methods:

\begin{table}[htbp]
  \centering
  \begin{minipage}{0.48\textwidth}
    \centering
\begin{tabular}{lrrrr}
\multicolumn{1}{c}{ } & \multicolumn{4}{c}{Width of Interval ($\mathcal{E}$)} \\
\cmidrule(l{3pt}r{3pt}){2-5}
$\alpha$ & 0.01 & 0.02 & 0.05 & 0.10 \\ 
\midrule
0.01 & 461 & 227 & 86 & 39 \\
0.05 & 267 & 132 & 50 & 23 \\
0.10 & 188 & 93 & 35 & 16 \\
\bottomrule
\end{tabular}
    \caption{Minimum $N$ required to ensure the shortest confidence interval is less than $2\mathcal{E}$ (Method 1)}
    \label{Tab::SampSize1}
  \end{minipage}\hfill
  \begin{minipage}{0.48\textwidth}
    \centering
\begin{tabular}{lrrrr}
\multicolumn{1}{c}{ } & \multicolumn{4}{c}{Width of Interval ($\mathcal{E}$)} \\
\cmidrule(l{3pt}r{3pt}){2-5}
$\alpha$ & 0.01 & 0.02 & 0.05 & 0.10 \\ 
\midrule
0.01 & 16581 & 4141 & 657 & 160 \\
0.05 & 9600 & 2398 & 381 & 93 \\
0.10 & 6762 & 1689 & 268 & 65\\
\bottomrule
\end{tabular}
    \caption{Minimum $N$ required to ensure the widest confidence interval is less than $2\mathcal{E}$ (Method 2)}
    \label{Tab::SampSize2}
  \end{minipage}
\end{table}

\paragraph{Description of the {\normalfont\textsf{Sample1DAction}} Routine}
Let $j$ be the feature that we are intervening on. 

\textbf{Case 1:} $|\pi'| = 1$ (i.e., $\pi' = \{j\}$, $j$ is not jointly constrained with other features). 

Here, there are no downstream effects from intervening on $j$. We take a uniformly random intervention from $\Vjset{\xp}{j}$: 
\[
\ab^* \sim \Ajset{\xp}{j}
\]
which abides by $j$'s separable actionability constraints like feature bounds and monotonicity.

\textbf{Case 2:} $|\pi'| > 1$ (i.e., $j$ is jointly constrained with other features)

\newcommand{\discpart}[0]{\pi'_{\textrm{disc}}}
\newcommand{\ctspart}[0]{\pi'_{\textrm{cts}}}
We breakdown the partition $\pi'$ into three disjoint subsets:
\[
\pi' = \{j\} \cup \discpart \cup \ctspart
\]
where $\discpart$ and $\ctspart$ are the sets of discrete and and continuous features in $\pi'$ respectively.

We consider the following three sub-cases:

\emph{Case 2a:} $|\ctspart| = 0$---all features in $\pi'$ are discrete.

We repeatedly solve the MIP in \textsf{Find1DAction} and take a sample from the resulting set of feasible actions.
        
\emph{Case 2b:} $|\discpart| = 0$---all features in $\pi'$ are continuous.

We sample action values that abide by separable actionability constraints for each feature in $\discpart$.

\emph{Case 2c:} $|\ctspart|, |\discpart| > 0 $---part contains discrete and continuous features.

We run the sampling steps in \emph{Case 2a, 2b} for $\discpart \cup \{j\}$ and $\ctspart$ to get $\ab_{\textrm{disc}}$ and $\ab_{\textrm{cts}}$.

We then check feasibility on $\ab^* = \ab_{\textrm{disc}} + \ab_{\textrm{cts}}$ by running $\textsf{CheckFeasibility}(\ab^*, \Ajset{\xp}{j})$.

\paragraph{Description of {\normalfont \textsf{CheckFeasibility}} Routine}

We describe the implementation for the $\feasible{\xb, \ab^*, \runningaset}$ in \cref{Alg::ReachableSetSampling}. Contrary to the MIP formulation in \cref{Appendix::FindMIP}, given the original point $\xp \in \X$ and the sampled action $\ab^*$, we solve the MIP once.

\begin{subequations}
\label{MIP::CF}
\footnotesize
\renewcommand{\arraystretch}{1.5}
\begin{equationarray}{@{}c@{}r@{\,}c@{\,}l>{\,}l>{\,}l@{\;}}
\min_{\ab} & \quad 1 &  &  &  & \notag \\[1.5em]
\st 
& \ab & = & \ab^* &  & \mipwhat{match action $\ab^*$} \label{Con::CF::MatchInter}\\
& a_k & = & \akpos - \akneg & k \in \jpart & \mipwhat{reconstruction of $a_k$} \label{Con::CF::Recon}\\
& \akpos & \geq & a_k & k \in \jpart & \mipwhat{positive component of $a_k$} \label{Con::CF::AbsValue1}\\
& \akneg & \geq & -a_k & k \in \jpart & \mipwhat{negative component of $a_k$} \label{Con::CF::AbsValue2} \\
& \akpos & \leq & \left|\sup_{\ab' \in \Ajset{\xp}{j}} a_k' \right|\sigma_k & k \in \jpart & \mipwhat{$\akpos > 0 \implies \sigma_k = 1$} \label{Con::CF::SepUB}\\
& \akneg & \leq & \left|\inf_{\ab' \in \Ajset{\xp}{j}} a_k' \right|(1-\sigma_k) & k \in \jpart & \mipwhat{$\akneg > 0 \implies \sigma_k = 0$} \label{Con::CF::SepLB}\\
& \ab & \in & \Ajset{\xp}{j} & & \mipwhat{joint actionability constraints on $j$} \label{Con::CF::SepActionSet}  \\
& \akpos, \akneg & \in & \R_{+} & k \in \jpart & \mipwhat{absolute value of $a_k$} \label{Con::CF::Abs1}  \\
& \sigma_k & \in & \B & k \in \jpart  & \mipwhat{sign indicator of $a_k$} \label{Con::CF::ASign}
\end{equationarray} 
\end{subequations}

The formulation is a variant of the problem in \cref{Appendix::FindMIP}, where:
\begin{itemize}
    \item $\ab = \ab^*$,
    \item $\Knogood = \varnothing$,
    \item and set the objective to $\min_{\ab} \; 1$
\end{itemize}

Hence $\feasible{\xb, \ab^*, \runningaset} = 1$ if $\ab^*$ is feasible under actionability constraints and 0 otherwise.

In practice, we run \textsf{Sample1DAction} and \textsf{CheckFeasibility} in \cref{Alg::ReachableSetSampling} in batches for efficiency; rather than sampling one action and checking feasibility, we sample $\tilde{N} >> N$ points and then check feasibility at once. We sample more than the required $N$ points to account for rejected samples in the \textsf{CheckFeasibility} step.

\clearpage
\section{Supplementary Experiment Details}
\label{Appendix::ExperimentDetails}

\subsection{Details for the {\normalfont\textds{heloc}} Dataset}

The \textds{heloc} dataset was created to predict repayment on Home Equity Line of Credit HELOC applications; these are loans that use people's homes as collateral. The anonymized version of the dataset was developed by FICO for use in an Explainable Machine Learning Challenge in 2018~\citep{fico}.
Each instance in the dataset is an application for a home equity loan from a single applicant. There are $n = 10,459$ instances and $d = 23$ features. Here, the label $y_i = 0$ if an applicant has been more than 90 days overdue on their payments in the last 2 years and $y_i = 1$ otherwise. We thermometer encode continuous or integer features after dropping rows and features with missing data (see \cref{Tab::ficoASet}). See \repourl{} for dataset processing code.

\paragraph{Actionability Constraints}
\begin{table*}[!h]
\centering
\resizebox{0.7\linewidth}{!}{\begin{tabular}{lllllllr}
\toprule
\textheader{Name} & \textheader{Type} & \textheader{LB} & \textheader{UB} & \textheader{Actionability} & \textheader{Sign} & \textheader{Joint Constraints} & \textheader{Partition ID} \\
\midrule
\textfn{NumInstallTrades$\geq$2} & $\{0,1\}$ & 0 & 1 & Yes & $+$ & 20, 21, 24, 25, 28, 29, 32, 33 & 14 \\
\textfn{NumInstallTradesWBalance$\geq$2} & $\{0,1\}$ & 0 & 1 & Yes & $+$ & 20, 21, 24, 25, 28, 29, 32, 33 & 14 \\
\textfn{NumInstallTrades$\geq$3} & $\{0,1\}$ & 0 & 1 & Yes & $+$ & 20, 21, 24, 25, 28, 29, 32, 33 & 14 \\
\textfn{NumInstallTradesWBalance$\geq$3} & $\{0,1\}$ & 0 & 1 & Yes & $+$ & 20, 21, 24, 25, 28, 29, 32, 33 & 14 \\
\textfn{NumInstallTrades$\geq$5} & $\{0,1\}$ & 0 & 1 & Yes & $+$ & 20, 21, 24, 25, 28, 29, 32, 33 & 14 \\
\textfn{NumInstallTradesWBalance$\geq$5} & $\{0,1\}$ & 0 & 1 & Yes & $+$ & 20, 21, 24, 25, 28, 29, 32, 33 & 14 \\
\textfn{NumInstallTrades$\geq$7} & $\{0,1\}$ & 0 & 1 & Yes & $+$ & 20, 21, 24, 25, 28, 29, 32, 33 & 14 \\
\textfn{NumInstallTradesWBalance$\geq$7} & $\{0,1\}$ & 0 & 1 & Yes & $+$ & 20, 21, 24, 25, 28, 29, 32, 33 & 14 \\
\textfn{NumRevolvingTrades$\geq$2} & $\{0,1\}$ & 0 & 1 & Yes & $+$ & 22, 23, 26, 27, 30, 31, 34, 35 & 15 \\
\textfn{NumRevolvingTradesWBalance$\geq$2} & $\{0,1\}$ & 0 & 1 & Yes & $+$ & 22, 23, 26, 27, 30, 31, 34, 35 & 15 \\
\textfn{NumRevolvingTrades$\geq$3} & $\{0,1\}$ & 0 & 1 & Yes & $+$ & 22, 23, 26, 27, 30, 31, 34, 35 & 15 \\
\textfn{NumRevolvingTradesWBalance$\geq$3} & $\{0,1\}$ & 0 & 1 & Yes & $+$ & 22, 23, 26, 27, 30, 31, 34, 35 & 15 \\
\textfn{NumRevolvingTrades$\geq$5} & $\{0,1\}$ & 0 & 1 & Yes & $+$ & 22, 23, 26, 27, 30, 31, 34, 35 & 15 \\
\textfn{NumRevolvingTradesWBalance$\geq$5} & $\{0,1\}$ & 0 & 1 & Yes & $+$ & 22, 23, 26, 27, 30, 31, 34, 35 & 15 \\
\textfn{NumRevolvingTrades$\geq$7} & $\{0,1\}$ & 0 & 1 & Yes & $+$ & 22, 23, 26, 27, 30, 31, 34, 35 & 15 \\
\textfn{NumRevolvingTradesWBalance$\geq$7} & $\{0,1\}$ & 0 & 1 & Yes & $+$ & 22, 23, 26, 27, 30, 31, 34, 35 & 15 \\
\textfn{YearsOfAccountHistory} & $\mathbb{Z}$ & 0 & 50 & No &  & 5, 17, 18, 19 & 5 \\
\textfn{YearsSinceLastDelqTrade$\leq$1} & $\{0,1\}$ & 0 & 1 & Yes & $+$ & 5, 17, 18, 19 & 5 \\
\textfn{YearsSinceLastDelqTrade$\leq$3} & $\{0,1\}$ & 0 & 1 & Yes & $+$ & 5, 17, 18, 19 & 5 \\
\textfn{YearsSinceLastDelqTrade$\leq$5} & $\{0,1\}$ & 0 & 1 & Yes & $+$ & 5, 17, 18, 19 & 5 \\
\textfn{NetFractionInstallBurden$\geq$10} & $\{0,1\}$ & 0 & 1 & Yes & $+$ & 36, 37, 38 & 16 \\
\textfn{NetFractionInstallBurden$\geq$20} & $\{0,1\}$ & 0 & 1 & Yes & $+$ & 36, 37, 38 & 16 \\
\textfn{NetFractionInstallBurden$\geq$50} & $\{0,1\}$ & 0 & 1 & Yes & $+$ & 36, 37, 38 & 16 \\
\textfn{NetFractionRevolvingBurden$\geq$10} & $\{0,1\}$ & 0 & 1 & Yes & $+$ & 39, 40, 41 & 17 \\
\textfn{NetFractionRevolvingBurden$\geq$20} & $\{0,1\}$ & 0 & 1 & Yes & $+$ & 39, 40, 41 & 17 \\
\textfn{NetFractionRevolvingBurden$\geq$50} & $\{0,1\}$ & 0 & 1 & Yes & $+$ & 39, 40, 41 & 17 \\
\textfn{AvgYearsInFile$\geq$3} & $\{0,1\}$ & 0 & 1 & Yes & $+$ & 6, 7, 8 & 6 \\
\textfn{AvgYearsInFile$\geq$5} & $\{0,1\}$ & 0 & 1 & Yes & $+$ & 6, 7, 8 & 6 \\
\textfn{AvgYearsInFile$\geq$7} & $\{0,1\}$ & 0 & 1 & Yes & $+$ & 6, 7, 8 & 6 \\
\textfn{MostRecentTradeWithinLastYear} & $\{0,1\}$ & 0 & 1 & Yes &  & 9, 10 & 7 \\
\textfn{MostRecentTradeWithinLast2Years} & $\{0,1\}$ & 0 & 1 & Yes &  & 9, 10 & 7 \\
\textfn{ExternalRiskEstimate$\geq$40} & $\{0,1\}$ & 0 & 1 & No &  & -- & 0 \\
\textfn{ExternalRiskEstimate$\geq$50} & $\{0,1\}$ & 0 & 1 & No &  & -- & 1 \\
\textfn{ExternalRiskEstimate$\geq$60} & $\{0,1\}$ & 0 & 1 & No &  & -- & 2 \\
\textfn{ExternalRiskEstimate$\geq$70} & $\{0,1\}$ & 0 & 1 & No &  & -- & 3 \\
\textfn{ExternalRiskEstimate$\geq$80} & $\{0,1\}$ & 0 & 1 & No &  & -- & 4 \\
\textfn{AnyDerogatoryComment} & $\{0,1\}$ & 0 & 1 & No &  & -- & 8 \\
\textfn{AnyTrade120DaysDelq} & $\{0,1\}$ & 0 & 1 & No &  & -- & 9 \\
\textfn{AnyTrade90DaysDelq} & $\{0,1\}$ & 0 & 1 & No &  & -- & 10 \\
\textfn{AnyTrade60DaysDelq} & $\{0,1\}$ & 0 & 1 & No &  & -- & 11 \\
\textfn{AnyTrade30DaysDelq} & $\{0,1\}$ & 0 & 1 & No &  & -- & 12 \\
\textfn{NoDelqEver} & $\{0,1\}$ & 0 & 1 & No &  & -- & 13 \\
\textfn{NumBank2NatlTradesWHighUtilizationGeq2} & $\{0,1\}$ & 0 & 1 & Yes & $+$ & -- & 18 \\
\bottomrule
\end{tabular}

}
\caption{
Separable Actionability Constraints for the processed \textds{heloc} dataset. 
\textbf{Type} indicates the feature type ($\Z$ for integer, $\{0,1\}$ for binary). \textbf{LB}, \textbf{UB} are the lower and upper bounds for the feature.
\textbf{Actionability} indicates whether the feature is globally actionable.
\textbf{Sign} indicates if the feature can only increase (+) or decrease (-).
\textbf{Joint Constraints} are a list non-separable constraint indices it is tied to (if any).
\textbf{Partition ID} indicates which partition the feature belongs to.
}
\label{Tab::ficoASet}
\end{table*}

The joint actionability constraints listed in \cref{Tab::ficoASet} include:
\begin{constraints}
\item DirectionalLinkage: Actions on \textfn{NumRevolvingTradesWBalance$\geq$2} will induce to actions on ['\textfn{NumRevolvingTrades$\geq$2}'].Each unit change in \textfn{NumRevolvingTradesWBalance$\geq$2} leads to:1.00-unit change in \textfn{NumRevolvingTrades$\geq$2}
\item DirectionalLinkage: Actions on \textfn{NumInstallTradesWBalance$\geq$2} will induce to actions on ['\textfn{NumInstallTrades$\geq$2}'].Each unit change in \textfn{NumInstallTradesWBalance$\geq$2} leads to:1.00-unit change in \textfn{NumInstallTrades$\geq$2}
\item DirectionalLinkage: Actions on \textfn{NumRevolvingTradesWBalance$\geq$3} will induce to actions on ['\textfn{NumRevolvingTrades$\geq$3}'].Each unit change in \textfn{NumRevolvingTradesWBalance$\geq$3} leads to:1.00-unit change in \textfn{NumRevolvingTrades$\geq$3}
\item DirectionalLinkage: Actions on \textfn{NumInstallTradesWBalance$\geq$3} will induce to actions on ['\textfn{NumInstallTrades$\geq$3}'].Each unit change in \textfn{NumInstallTradesWBalance$\geq$3} leads to:1.00-unit change in \textfn{NumInstallTrades$\geq$3}
\item DirectionalLinkage: Actions on \textfn{NumRevolvingTradesWBalance$\geq$5} will induce to actions on ['\textfn{NumRevolvingTrades$\geq$5}'].Each unit change in \textfn{NumRevolvingTradesWBalance$\geq$5} leads to:1.00-unit change in \textfn{NumRevolvingTrades$\geq$5}
\item DirectionalLinkage: Actions on \textfn{NumInstallTradesWBalance$\geq$5} will induce to actions on ['\textfn{NumInstallTrades$\geq$5}'].Each unit change in \textfn{NumInstallTradesWBalance$\geq$5} leads to:1.00-unit change in \textfn{NumInstallTrades$\geq$5}
\item DirectionalLinkage: Actions on \textfn{NumRevolvingTradesWBalance$\geq$7} will induce to actions on ['\textfn{NumRevolvingTrades$\geq$7}'].Each unit change in \textfn{NumRevolvingTradesWBalance$\geq$7} leads to:1.00-unit change in \textfn{NumRevolvingTrades$\geq$7}
\item DirectionalLinkage: Actions on \textfn{NumInstallTradesWBalance$\geq$7} will induce to actions on ['\textfn{NumInstallTrades$\geq$7}'].Each unit change in \textfn{NumInstallTradesWBalance$\geq$7} leads to:1.00-unit change in \textfn{NumInstallTrades$\geq$7}
\item DirectionalLinkage: Actions on \textfn{YearsSinceLastDelqTrade$\leq$1} will induce to actions on ['\textfn{YearsOfAccountHistory}'].Each unit change in \textfn{YearsSinceLastDelqTrade$\leq$1} leads to:-1.00-unit change in \textfn{YearsOfAccountHistory}
\item DirectionalLinkage: Actions on \textfn{YearsSinceLastDelqTrade$\leq$3} will induce to actions on ['\textfn{YearsOfAccountHistory}'].Each unit change in \textfn{YearsSinceLastDelqTrade$\leq$3} leads to:-3.00-unit change in \textfn{YearsOfAccountHistory}
\item DirectionalLinkage: Actions on \textfn{YearsSinceLastDelqTrade$\leq$5} will induce to actions on ['\textfn{YearsOfAccountHistory}'].Each unit change in \textfn{YearsSinceLastDelqTrade$\leq$5} leads to:-5.00-unit change in \textfn{YearsOfAccountHistory}
\item ReachabilityConstraint: The values of [\textfn{MostRecentTradeWithinLastYear}, \textfn{MostRecentTradeWithinLast2Years}] must belong to one of 4 values with custom reachability conditions.
\item ThermometerEncoding: Actions on [\textfn{YearsSinceLastDelqTrade$\leq$1}, \textfn{YearsSinceLastDelqTrade$\leq$3}, \textfn{YearsSinceLastDelqTrade$\leq$5}] must preserve thermometer encoding of YearsSinceLastDelqTradeleq., which can only decrease.Actions can only turn off higher-level dummies that are on, where \textfn{YearsSinceLastDelqTrade$\leq$1} is the lowest-level dummy and \textfn{YearsSinceLastDelqTrade$\leq$5} is the highest-level-dummy.
\item ThermometerEncoding: Actions on [\textfn{AvgYearsInFile$\geq$3}, \textfn{AvgYearsInFile$\geq$5}, \textfn{AvgYearsInFile$\geq$7}] must preserve thermometer encoding of AvgYearsInFilegeq., which can only increase.Actions can only turn on higher-level dummies that are off, where \textfn{AvgYearsInFile$\geq$3} is the lowest-level dummy and \textfn{AvgYearsInFile$\geq$7} is the highest-level-dummy.
\item ThermometerEncoding: Actions on [\textfn{NetFractionRevolvingBurden$\geq$10}, \textfn{NetFractionRevolvingBurden$\geq$20}, \textfn{NetFractionRevolvingBurden$\geq$50}] must preserve thermometer encoding of NetFractionRevolvingBurdengeq., which can only decrease.Actions can only turn off higher-level dummies that are on, where \textfn{NetFractionRevolvingBurden$\geq$10} is the lowest-level dummy and \textfn{NetFractionRevolvingBurden$\geq$50} is the highest-level-dummy.
\item ThermometerEncoding: Actions on [\textfn{NetFractionInstallBurden$\geq$10}, \textfn{NetFractionInstallBurden$\geq$20}, \textfn{NetFractionInstallBurden$\geq$50}] must preserve thermometer encoding of NetFractionInstallBurdengeq., which can only decrease.Actions can only turn off higher-level dummies that are on, where \textfn{NetFractionInstallBurden$\geq$10} is the lowest-level dummy and \textfn{NetFractionInstallBurden$\geq$50} is the highest-level-dummy.
\item ThermometerEncoding: Actions on [\textfn{NumRevolvingTradesWBalance$\geq$2}, \textfn{NumRevolvingTradesWBalance$\geq$3}, \textfn{NumRevolvingTradesWBalance$\geq$5}, \textfn{NumRevolvingTradesWBalance$\geq$7}] must preserve thermometer encoding of NumRevolvingTradesWBalancegeq., which can only decrease.Actions can only turn off higher-level dummies that are on, where \textfn{NumRevolvingTradesWBalance$\geq$2} is the lowest-level dummy and \textfn{NumRevolvingTradesWBalance$\geq$7} is the highest-level-dummy.
\item ThermometerEncoding: Actions on [\textfn{NumRevolvingTrades$\geq$2}, \textfn{NumRevolvingTrades$\geq$3}, \textfn{NumRevolvingTrades$\geq$5}, \textfn{NumRevolvingTrades$\geq$7}] must preserve thermometer encoding of NumRevolvingTradesgeq., which can only decrease.Actions can only turn off higher-level dummies that are on, where \textfn{NumRevolvingTrades$\geq$2} is the lowest-level dummy and \textfn{NumRevolvingTrades$\geq$7} is the highest-level-dummy.
\item ThermometerEncoding: Actions on [\textfn{NumInstallTradesWBalance$\geq$2}, \textfn{NumInstallTradesWBalance$\geq$3}, \textfn{NumInstallTradesWBalance$\geq$5}, \textfn{NumInstallTradesWBalance$\geq$7}] must preserve thermometer encoding of NumInstallTradesWBalancegeq., which can only decrease.Actions can only turn off higher-level dummies that are on, where \textfn{NumInstallTradesWBalance$\geq$2} is the lowest-level dummy and \textfn{NumInstallTradesWBalance$\geq$7} is the highest-level-dummy.
\item ThermometerEncoding: Actions on [\textfn{NumInstallTrades$\geq$2}, \textfn{NumInstallTrades$\geq$3}, \textfn{NumInstallTrades$\geq$5}, \textfn{NumInstallTrades$\geq$7}] must preserve thermometer encoding of NumInstallTradesgeq., which can only decrease.Actions can only turn off higher-level dummies that are on, where \textfn{NumInstallTrades$\geq$2} is the lowest-level dummy and \textfn{NumInstallTrades$\geq$7} is the highest-level-dummy.
\end{constraints}

\clearpage
\subsection{Details for the {\normalfont\textds{german}} Dataset}

The \textds{german} dataset was created in 1994 and contains information about loan history, demographics, occupation, payment history, and whether or not somebody is a good customer~\citep{dua2019uci}. Each instance is credit applicant. There are $n = 1,000$ instances and $d=20$ features. The features are all either categorical or discrete. The label a indicates is a loan applicant is ``good'' ($y_i = 1$) or ``bad'' ($y_i = 0$). There are no missing values in the dataset. We renamed some of the features to be indicative of the values they represent. The dataset is self-contained and anonymous, and it includes features describing gender, age, and marital status. 

\paragraph{Actionability Constraints}
\begin{table*}[h]
\centering
\resizebox{0.7\linewidth}{!}{\begin{tabular}{lllllllr}
\toprule
\textheader{Name} & \textheader{Type} & \textheader{LB} & \textheader{UB} & \textheader{Actionability} & \textheader{Sign} & \textheader{Joint Constraints} & \textheader{Partition ID} \\
\midrule
\textfn{Age} & $\mathbb{Z}$ & 19 & 75 & No &  & 0, 4, 12 & 0 \\
\textfn{YearsAtResidence} & $\mathbb{Z}$ & 0 & 7 & Yes & $+$ & 0, 4, 12 & 0 \\
\textfn{YearsEmployed$\geq$1} & $\{0,1\}$ & 0 & 1 & Yes & $+$ & 0, 4, 12 & 0 \\
\textfn{CheckingAcct\_exists} & $\{0,1\}$ & 0 & 1 & Yes & $+$ & 32, 33 & 30 \\
\textfn{CheckingAcct$\geq$0} & $\{0,1\}$ & 0 & 1 & Yes & $+$ & 32, 33 & 30 \\
\textfn{SavingsAcct\_exists} & $\{0,1\}$ & 0 & 1 & Yes & $+$ & 34, 35 & 31 \\
\textfn{SavingsAcct$\geq$100} & $\{0,1\}$ & 0 & 1 & Yes & $+$ & 34, 35 & 31 \\
\textfn{Male} & $\{0,1\}$ & 0 & 1 & No &  & -- & 1 \\
\textfn{Single} & $\{0,1\}$ & 0 & 1 & No &  & -- & 2 \\
\textfn{ForeignWorker} & $\{0,1\}$ & 0 & 1 & No &  & -- & 3 \\
\textfn{LiablePersons} & $\mathbb{Z}$ & 1 & 2 & No &  & -- & 4 \\
\textfn{Housing$=$Renter} & $\{0,1\}$ & 0 & 1 & No &  & -- & 5 \\
\textfn{Housing$=$Owner} & $\{0,1\}$ & 0 & 1 & No &  & -- & 6 \\
\textfn{Housing$=$Free} & $\{0,1\}$ & 0 & 1 & No &  & -- & 7 \\
\textfn{Job$=$Unskilled} & $\{0,1\}$ & 0 & 1 & No &  & -- & 8 \\
\textfn{Job$=$Skilled} & $\{0,1\}$ & 0 & 1 & No &  & -- & 9 \\
\textfn{Job$=$Management} & $\{0,1\}$ & 0 & 1 & No &  & -- & 10 \\
\textfn{CreditAmt$\geq$1000K} & $\{0,1\}$ & 0 & 1 & No &  & -- & 11 \\
\textfn{CreditAmt$\geq$2000K} & $\{0,1\}$ & 0 & 1 & No &  & -- & 12 \\
\textfn{CreditAmt$\geq$5000K} & $\{0,1\}$ & 0 & 1 & No &  & -- & 13 \\
\textfn{CreditAmt$\geq$10000K} & $\{0,1\}$ & 0 & 1 & No &  & -- & 14 \\
\textfn{LoanDuration$\leq$6} & $\{0,1\}$ & 0 & 1 & No &  & -- & 15 \\
\textfn{LoanDuration$\geq$12} & $\{0,1\}$ & 0 & 1 & No &  & -- & 16 \\
\textfn{LoanDuration$\geq$24} & $\{0,1\}$ & 0 & 1 & No &  & -- & 17 \\
\textfn{LoanDuration$\geq$36} & $\{0,1\}$ & 0 & 1 & No &  & -- & 18 \\
\textfn{LoanRate} & $\mathbb{Z}$ & 1 & 4 & No &  & -- & 19 \\
\textfn{HasGuarantor} & $\{0,1\}$ & 0 & 1 & Yes & $+$ & -- & 20 \\
\textfn{LoanRequiredForBusiness} & $\{0,1\}$ & 0 & 1 & No &  & -- & 21 \\
\textfn{LoanRequiredForEducation} & $\{0,1\}$ & 0 & 1 & No &  & -- & 22 \\
\textfn{LoanRequiredForCar} & $\{0,1\}$ & 0 & 1 & No &  & -- & 23 \\
\textfn{LoanRequiredForHome} & $\{0,1\}$ & 0 & 1 & No &  & -- & 24 \\
\textfn{NoCreditHistory} & $\{0,1\}$ & 0 & 1 & No &  & -- & 25 \\
\textfn{HistoryOfLatePayments} & $\{0,1\}$ & 0 & 1 & No &  & -- & 26 \\
\textfn{HistoryOfDelinquency} & $\{0,1\}$ & 0 & 1 & No &  & -- & 27 \\
\textfn{HistoryOfBankInstallments} & $\{0,1\}$ & 0 & 1 & Yes & $+$ & -- & 28 \\
\textfn{HistoryOfStoreInstallments} & $\{0,1\}$ & 0 & 1 & Yes & $+$ & -- & 29 \\
\bottomrule
\end{tabular}

}
\caption{
Separable Actionability Constraints for the processed \textds{german} dataset. 
\textbf{Type} indicates the feature type ($\Z$ for integer, $\{0,1\}$ for binary). \textbf{LB}, \textbf{UB} are the lower and upper bounds for the feature.
\textbf{Actionability} indicates whether the feature is globally actionable.
\textbf{Sign} indicates if the feature can only increase (+) or decrease (-).
\textbf{Joint Constraints} are a list non-separable constraint indices it is tied to (if any).
\textbf{Partition ID} indicates which partition the feature belongs to.
}
\label{Tab::germanASet}
\end{table*}

The joint actionability constraints listed in \cref{Tab::germanASet} include:
\begin{constraints}
\item DirectionalLinkage: Actions on \textfn{YearsAtResidence} will induce to actions on ['\textfn{Age}'].Each unit change in \textfn{YearsAtResidence} leads to:1.00-unit change in \textfn{Age}
\item DirectionalLinkage: Actions on \textfn{YearsEmployed$\geq$1} will induce to actions on ['\textfn{Age}'].Each unit change in \textfn{YearsEmployed$\geq$1} leads to:1.00-unit change in \textfn{Age}
\item ThermometerEncoding: Actions on [\textfn{CheckingAcctexists}, \textfn{CheckingAcct$\geq$0}] must preserve thermometer encoding of CheckingAcct., which can only increase.Actions can only turn on higher-level dummies that are off, where \textfn{CheckingAcctexists} is the lowest-level dummy and \textfn{CheckingAcct$\geq$0} is the highest-level-dummy.
\item ThermometerEncoding: Actions on [\textfn{SavingsAcctexists}, \textfn{SavingsAcct$\geq$100}] must preserve thermometer encoding of SavingsAcct., which can only increase.Actions can only turn on higher-level dummies that are off, where \textfn{SavingsAcctexists} is the lowest-level dummy and \textfn{SavingsAcct$\geq$100} is the highest-level-dummy.
\end{constraints}

\clearpage
\subsection{Details for the {\normalfont\textds{givemecredit}} Dataset}

The \textds{givemecredit} dataset is used to determine whether a loan should be given or denied~\citep{data2018givemecredit}. The label indicates whether someone was 90 days past due in the two years following data collection. Delinquency refers to a debt with an overdue payment; this dataset is used to predict if someone will experience financial distress in the next two years.It contains information about $n=120,268$ loan recipients, and each instance represents a borrower. There are $d=10$ features before preprocessing. Here, the label is $y_i = 0$ if an applicant has had a serious delinquency in two years and $y_i$ otherwise. The data is self-contained and anonymous, and it contains features describing age, income, and the number of dependents.

\paragraph{Actionability Constraints}
\begin{table*}[h]
\centering
\resizebox{0.7\linewidth}{!}{\begin{tabular}{lllllllr}
\toprule
\textheader{Name} & \textheader{Type} & \textheader{LB} & \textheader{UB} & \textheader{Actionability} & \textheader{Sign} & \textheader{Joint Constraints} & \textheader{Partition ID} \\
\midrule
\textfn{CreditLineUtilization$\geq$10.0} & $\{0,1\}$ & 0 & 1 & Yes &  & 12, 13, 14, 15, 16 & 10 \\
\textfn{CreditLineUtilization$\geq$20.0} & $\{0,1\}$ & 0 & 1 & Yes &  & 12, 13, 14, 15, 16 & 10 \\
\textfn{CreditLineUtilization$\geq$50.0} & $\{0,1\}$ & 0 & 1 & Yes &  & 12, 13, 14, 15, 16 & 10 \\
\textfn{CreditLineUtilization$\geq$70.0} & $\{0,1\}$ & 0 & 1 & Yes &  & 12, 13, 14, 15, 16 & 10 \\
\textfn{CreditLineUtilization$\geq$100.0} & $\{0,1\}$ & 0 & 1 & Yes &  & 12, 13, 14, 15, 16 & 10 \\
\textfn{MonthlyIncome$\geq$3K} & $\{0,1\}$ & 0 & 1 & Yes & $+$ & 9, 10, 11 & 9 \\
\textfn{MonthlyIncome$\geq$5K} & $\{0,1\}$ & 0 & 1 & Yes & $+$ & 9, 10, 11 & 9 \\
\textfn{MonthlyIncome$\geq$10K} & $\{0,1\}$ & 0 & 1 & Yes & $+$ & 9, 10, 11 & 9 \\
\textfn{AnyRealEstateLoans} & $\{0,1\}$ & 0 & 1 & Yes & $+$ & 17, 18 & 11 \\
\textfn{MultipleRealEstateLoans} & $\{0,1\}$ & 0 & 1 & Yes & $+$ & 17, 18 & 11 \\
\textfn{AnyCreditLinesAndLoans} & $\{0,1\}$ & 0 & 1 & Yes & $+$ & 19, 20 & 12 \\
\textfn{MultipleCreditLinesAndLoans} & $\{0,1\}$ & 0 & 1 & Yes & $+$ & 19, 20 & 12 \\
\textfn{Age$\leq$24} & $\{0,1\}$ & 0 & 1 & No &  & -- & 0 \\
\textfn{Age\_bt\_25\_to\_30} & $\{0,1\}$ & 0 & 1 & No &  & -- & 1 \\
\textfn{Age\_bt\_30\_to\_59} & $\{0,1\}$ & 0 & 1 & No &  & -- & 2 \\
\textfn{Age$\geq$60} & $\{0,1\}$ & 0 & 1 & No &  & -- & 3 \\
\textfn{NumberOfDependents$=$0} & $\{0,1\}$ & 0 & 1 & No &  & -- & 4 \\
\textfn{NumberOfDependents$=$1} & $\{0,1\}$ & 0 & 1 & No &  & -- & 5 \\
\textfn{NumberOfDependents$\geq$2} & $\{0,1\}$ & 0 & 1 & No &  & -- & 6 \\
\textfn{NumberOfDependents$\geq$5} & $\{0,1\}$ & 0 & 1 & No &  & -- & 7 \\
\textfn{DebtRatio$\geq$1} & $\{0,1\}$ & 0 & 1 & Yes & $+$ & -- & 8 \\
\textfn{HistoryOfLatePayment} & $\{0,1\}$ & 0 & 1 & No &  & -- & 13 \\
\textfn{HistoryOfDelinquency} & $\{0,1\}$ & 0 & 1 & No &  & -- & 14 \\
\bottomrule
\end{tabular}

}
\caption{
Separable Actionability Constraints for the processed \textds{givemecredit} dataset. 
\textbf{Type} indicates the feature type ($\Z$ for integer, $\{0,1\}$ for binary). \textbf{LB}, \textbf{UB} are the lower and upper bounds for the feature.
\textbf{Actionability} indicates whether the feature is globally actionable.
\textbf{Sign} indicates if the feature can only increase (+) or decrease (-).
\textbf{Joint Constraints} are a list non-separable constraint indices it is tied to (if any).
\textbf{Partition ID} indicates which partition the feature belongs to.
}
\label{Tab::givemeASet}
\end{table*}

The joint actionability constraints listed in \cref{Tab::givemeASet} include:
\begin{constraints}
\item ThermometerEncoding: Actions on [\textfn{MonthlyIncome$\geq$3K}, \textfn{MonthlyIncome$\geq$5K}, \textfn{MonthlyIncome$\geq$10K}] must preserve thermometer encoding of MonthlyIncomegeq., which can only increase.Actions can only turn on higher-level dummies that are off, where \textfn{MonthlyIncome$\geq$3K} is the lowest-level dummy and \textfn{MonthlyIncome$\geq$10K} is the highest-level-dummy.
\item ThermometerEncoding: Actions on [\textfn{CreditLineUtilization$\geq$10.0}, \textfn{CreditLineUtilization$\geq$20.0}, \textfn{CreditLineUtilization$\geq$50.0}, \textfn{CreditLineUtilization$\geq$70.0}, \textfn{CreditLineUtilization$\geq$100.0}] must preserve thermometer encoding of CreditLineUtilizationgeq., which can only decrease.Actions can only turn off higher-level dummies that are on, where \textfn{CreditLineUtilization$\geq$10.0} is the lowest-level dummy and \textfn{CreditLineUtilization$\geq$100.0} is the highest-level-dummy.
\item ThermometerEncoding: Actions on [\textfn{AnyRealEstateLoans}, \textfn{MultipleRealEstateLoans}] must preserve thermometer encoding of continuousattribute., which can only decrease.Actions can only turn off higher-level dummies that are on, where \textfn{AnyRealEstateLoans} is the lowest-level dummy and \textfn{MultipleRealEstateLoans} is the highest-level-dummy.
\item ThermometerEncoding: Actions on [\textfn{AnyCreditLinesAndLoans}, \textfn{MultipleCreditLinesAndLoans}] must preserve thermometer encoding of continuousattribute., which can only decrease.Actions can only turn off higher-level dummies that are on, where \textfn{AnyCreditLinesAndLoans} is the lowest-level dummy and \textfn{MultipleCreditLinesAndLoans} is the highest-level-dummy.
\end{constraints}

\clearpage
\subsection{Overview of Model Performance}
We include the performance of the classifiers used in \cref{Sec::Experiments}.

\begin{table*}[h!]
    \centering
    \resizebox{0.6\linewidth}{!}{\begin{tabular}{lllllll}
\multicolumn{1}{c}{ } & \multicolumn{2}{c}{\LR{}} & \multicolumn{2}{c}{\XGB{}} & \multicolumn{2}{c}{\RF{}} \\
\cmidrule(l{3pt}r{3pt}){2-3} \cmidrule(l{3pt}r{3pt}){4-5} \cmidrule(l{3pt}r{3pt}){6-7}
Dataset & Train & Test & Train & Test & Train & Test\\
\midrule
\ficoinfo{} & \cell{r}{0.772} & \cell{r}{0.788} & \cell{r}{0.859} & \cell{r}{0.785} & \cell{r}{0.780} & \cell{r}{0.790}\\
\midrule

\germaninfo{} & \cell{r}{0.819} & \cell{r}{0.760} & \cell{r}{0.971} & \cell{r}{0.794} & \cell{r}{0.828} & \cell{r}{0.766}\\
\midrule

\givemecreditinfo{} & \cell{r}{0.841} & \cell{r}{0.844} & \cell{r}{0.875} & \cell{r}{0.793} & \cell{r}{0.864} & \cell{r}{0.835}\\
\bottomrule
\end{tabular}
}
    \caption{Train and Test AUC for models across all datasets. We optimized the model's hyperparameters through randomized search and divided the data into training and testing sets at an 80\% and 20\% ratio.}
    \label{Tab:ModelPerf}
\end{table*}

\section{Supplementary Experiment Results}
\subsection{Responsiveness of Explanations for Random Forests}
\label{Appendix::RFResults}

\begin{table*}[h]
    \centering
    \resizebox{\linewidth}{!}{
    
\begin{tabular}{ll*{5}r}
\multicolumn{2}{c}{ } & \multicolumn{2}{c}{\regular{}} & \multicolumn{2}{c}{\filtered{}} & \multicolumn{1}{c}{ } \\
\cmidrule(l{3pt}r{3pt}){3-4} \cmidrule(l{3pt}r{3pt}){5-6}
Dataset & Metrics & \lime{} & \shap{} & \limeaa{} & \shapaa{} & \resp\\
\midrule
\ficoinfolong{} & \dmeMetrics{} & \cell{r}{100.0\%\\\color{bad}{86.5\%}\\13.5\%\\0.0\%\\4.0} & \cell{r}{100.0\%\\\color{bad}{78.2\%}\\21.8\%\\0.0\%\\4.0} & \cell{r}{100.0\%\\\color{bad}{77.1\%}\\22.9\%\\0.0\%\\4.0} & \cell{r}{100.0\%\\\color{bad}{76.7\%}\\23.3\%\\0.5\%\\4.0} & \cell{r}{31.7\%\\0.0\%\\100.0\%\\\cellcolor{good}{\textbf{100.0\%}}\\2.4}\\
\midrule

\germaninfolong{} & \dmeMetrics{} & \cell{r}{100.0\%\\\color{bad}{100.0\%}\\0.0\%\\0.0\%\\4.0} & \cell{r}{100.0\%\\\color{bad}{89.1\%}\\10.9\%\\0.0\%\\4.0} & \cell{r}{100.0\%\\\color{bad}{76.6\%}\\23.4\%\\0.0\%\\4.0} & \cell{r}{100.0\%\\\color{bad}{64.6\%}\\35.4\%\\0.0\%\\4.0} & \cell{r}{48.0\%\\0.0\%\\100.0\%\\\cellcolor{good}{\textbf{100.0\%}}\\2.2}\\
\midrule

\givemecreditinfolong{} & \dmeMetrics{} & \cell{r}{100.0\%\\\color{bad}{56.5\%}\\43.5\%\\0.0\%\\4.0} & \cell{r}{100.0\%\\\color{bad}{26.8\%}\\73.2\%\\0.5\%\\4.0} & \cell{r}{100.0\%\\\color{bad}{28.4\%}\\71.6\%\\1.4\%\\4.0} & \cell{r}{100.0\%\\\color{bad}{21.0\%}\\79.0\%\\11.4\%\\4.0} & \cell{r}{93.2\%\\0.0\%\\100.0\%\\\cellcolor{good}{\textbf{100.0\%}}\\2.9}\\
\bottomrule
\end{tabular}

    }
    \caption{Responsiveness of feature-highlighting explanations for \RF{} for all methods and datasets. We generate explanations that highlight up to 4 top-scoring features from a given method. We report the proportion of individuals receiving an explanation (\emph{\% Presented with Explanations}) and the mean number of features in each explanation (\emph{\# Features Highlighted}). We also show the proportion of instances where all features are unresponsive (\emph{\% All Features Unresponsive}) highlighting {\color{bad}{positive values}}; at least one feature is responsive (\emph{\% At Least 1 Feature Responsive}), or all features are responsive (\emph{\% All Features Responsive}) highlighting the \textbf{\hl{best value}}.}
    \label{Table::RFResults}
\end{table*}

\clearpage
\subsection{Feature Responsiveness Rankings}
\label{Appendix::TopKPlots}

We include a plot to show how responsive features are at different rankings by \lime{}, \shap{}, \limeaa{}, \shapaa{} and \resp{} for each dataset. For every denied individual, we rank features by their absolute feature importance score returned by these methods. We exclude features with 0 attribution from the rankings.

The plots below show the \% of times where the feature at each rank are responsive (i.e., feature has \resp{} > 0). It allows us to visualize and compare how often these methods assign high attribution to responsive features.

\newcommand{\rankplotind}[3]{
\begin{wrapfigure}{}{0.45\textwidth}
    \centering
    \resizebox{\linewidth}{!}{
    \includegraphics[]{images/rank/#1_complex_nD_#2_#3_plot_rank.pdf}
    }
\end{wrapfigure}
}

\newcommand{\mdl}{temp}
\newcommand{\rankplotdouble}[2]{
  \ifthenelse{\equal{#2}{logreg}}
    {\renewcommand{\mdl}{\LR{} }}
    {
      \ifthenelse{\equal{#2}{xgb}}
        {\renewcommand{\mdl}{\XGB{} }}
        {
          \ifthenelse{\equal{#2}{rf}}
            {\renewcommand{\mdl}{\RF{} }}
            {\renewcommand{\mdl}{Unknown Model}}
        }
    }
\begin{figure}[H]
\centering
\resizebox{0.38\linewidth}{!}{
    \includegraphics[]{images/rank/#1_complex_nD_#2_SHAP_plot_rank.pdf}
}
\resizebox{0.38\linewidth}{!}{
    \includegraphics[]{images/rank/#1_complex_nD_#2_LIME_plot_rank.pdf}
}
\caption{
Responsiveness of features for individuals who are denied credit by the \mdl model on the \textds{#1} dataset according to absolute feature attribution rank using the original feature attribution method, its action-aware variant and \resp{}. For each method, we report the proportion of individuals with at least one responsive intervention on a feature with the $k$-th largest score ($k$-th ranked feature).
Features must have non-zero score to be included in a ``rank.''
}
\end{figure}
}

\subsubsection{{\normalfont\textds{heloc}}}
\rankplotdouble{fico}{logreg}
\rankplotdouble{fico}{xgb}
\rankplotdouble{fico}{rf}

\subsubsection{{\normalfont\textds{german}}}
\rankplotdouble{german}{logreg}
\rankplotdouble{german}{xgb}
\rankplotdouble{german}{rf}

\subsubsection{{\normalfont\textds{givemecredit}}}
\rankplotdouble{givemecredit}{logreg}
\rankplotdouble{givemecredit}{xgb}
\rankplotdouble{givemecredit}{rf}

\section{Supplementary Case Study Details}
\label{Appendix::DemoDetails}
\subsection{Actionability Constraints}
\begin{table*}[h]
\centering
\resizebox{0.7\linewidth}{!}{\begin{tabular}{lllllllr}
\toprule
\textheader{Name} & \textheader{Type} & \textheader{LB} & \textheader{UB} & \textheader{Actionability} & \textheader{Sign} & \textheader{Joint Constraints} & \textheader{Partition ID} \\
\midrule
\textfn{Age} & $\mathbb{Z}$ & 21 & 103 & No &  & 0, 8, 10 & 0 \\
\textfn{HistoryOfLatePaymentInPast2Years} & $\{0,1\}$ & 0 & 1 & Yes & $+$ & 0, 8, 10 & 0 \\
\textfn{HistoryOfDelinquencyInPast2Years} & $\{0,1\}$ & 0 & 1 & Yes & $+$ & 0, 8, 10 & 0 \\
\textfn{NumberRealEstateLoansOrLines} & $\mathbb{Z}$ & 0 & 100 & Yes & $+$ & 5, 6 & 5 \\
\textfn{NumberOfOpenCreditLinesAndLoans} & $\mathbb{Z}$ & 0 & 100 & Yes & $+$ & 5, 6 & 5 \\
\textfn{NumberOfDependents} & $\mathbb{Z}$ & 0 & 20 & No &  & -- & 1 \\
\textfn{DebtRatio} & $\mathbb{R}$ & 0.0 & 61106.5 & Yes &  & -- & 2 \\
\textfn{MonthlyIncome} & $\mathbb{Z}$ & 0 & 3008750 & Yes &  & -- & 3 \\
\textfn{CreditLineUtilization} & $\mathbb{R}$ & 0.0 & 50708.0 & Yes &  & -- & 4 \\
\textfn{HistoryOfLatePayment} & $\{0,1\}$ & 0 & 1 & No &  & -- & 6 \\
\textfn{HistoryOfDelinquency} & $\{0,1\}$ & 0 & 1 & No &  & -- & 7 \\
\bottomrule
\end{tabular}

}
\caption{
Separable Actionability Constraints for the processed continuous \textds{givemecredit} dataset. 
\textbf{Type} indicates the feature type ($\Z$ for integer, $\{0,1\}$ for binary). \textbf{LB}, \textbf{UB} are the lower and upper bounds for the feature.
\textbf{Actionability} indicates whether the feature is globally actionable.
\textbf{Sign} indicates if the feature can only increase (+) or decrease (-).
\textbf{Joint Constraints} are a list non-separable constraint indices it is tied to (if any).
\textbf{Partition ID} indicates which partition the feature belongs to.
}
\label{Tab::givemectsASet}
\end{table*}

The joint actionability constraints listed in \cref{Tab::givemectsASet} include:
\begin{constraints}
\item DirectionalLinkage: Actions on \textfn{NumberRealEstateLoansOrLines} will induce to actions on ['\textfn{NumberOfOpenCreditLinesAndLoans}'].Each unit change in \textfn{NumberRealEstateLoansOrLines} leads to:1.00-unit change in \textfn{NumberOfOpenCreditLinesAndLoans}
\item DirectionalLinkage: Actions on \textfn{HistoryOfLatePaymentInPast2Years} will induce to actions on ['\textfn{Age}'].Each unit change in \textfn{HistoryOfLatePaymentInPast2Years} leads to:2.00-unit change in \textfn{Age}
\item DirectionalLinkage: Actions on \textfn{HistoryOfDelinquencyInPast2Years} will induce to actions on ['\textfn{Age}'].Each unit change in \textfn{HistoryOfDelinquencyInPast2Years} leads to:2.00-unit change in \textfn{Age}
\end{constraints}

\subsection{Model Performance}
\begin{table}[h]
    \centering
\begin{tabular}{lll}
\multicolumn{1}{c}{ } & \multicolumn{2}{c}{\XGB{}} \\
\cmidrule(l{3pt}r{3pt}){2-3} 
Dataset & Train & Test\\
\midrule
\cell{l}{\textds{givemecredit}\\$n = 120,268$\\$d=11$\\ \citet{data2018givemecredit}} & \cell{r}{0.937} & \cell{r}{0.830}\\
\bottomrule
\end{tabular}
\caption{Model Performance of \XGB{} model on the \textds{givemecredit} dataset for \cref{Sec::Demos}}
\label{tab:my_label}
\end{table}

\end{document}